\newtheorem{theorem}{Theorem}
\newtheorem{lemma}[theorem]{Lemma}
\newtheorem{corollary}[theorem]{Corollary}
\def\ci{\!\perp\!}
\def\ra{\rightarrow}
\def\dra{\dashedrightarrow}
\def\la{\leftarrow}
\def\dla{\dashedleftarrow}
\def\to{\multimap}
\def\no{\multimap}
\newcommand{\comments}[1]{}
\tikzset{tt/.style={decoration={
  markings,
  mark=at position .485 with {\arrow{>}},
  mark=at position .515 with {\arrow{<}}},postaction={decorate}}}
\begin{document}

\title[]{Unifying Gaussian LWF and AMP Chain Graphs to Model Interference}

\author[]{Jose M. Pe\~{n}a\\
IDA, Link\"oping University, Sweden\\
jose.m.pena@liu.se}


\maketitle

\begin{abstract}
An intervention may have an effect on units other than those to which it was administered. This phenomenon is called interference and it usually goes unmodeled. In this paper, we propose to combine Lauritzen-Wermuth-Frydenberg and Andersson-Madigan-Perlman chain graphs to create a new class of causal models that can represent both interference and non-interference relationships for Gaussian distributions. Specifically, we define the new class of models, introduce global and local and pairwise Markov properties for them, and prove their equivalence. We also propose an algorithm for maximum likelihood parameter estimation for the new models, and report experimental results. Finally, we show how to compute the effects of interventions in the new models.
\end{abstract}

\section{Motivation}\label{sec:motivation}

Graphical models are among the most studied and used formalisms for causal inference. Some of the reasons of their success are that they make explicit and testable causal assumptions, and there exist algorithms for causal effect identification, counterfactual reasoning, mediation analysis, and model identification from non-experimental data \citep{Pearl2009,Petersetal.2017,Spirtesetal.2000}. However, in causal inference in general and graphical models in particular, one assumes more often than not that there is no interference, i.e. an intervention has no effect on units other than those to which the intervention was administered \citep{VanderWeeleetal.2012}. This may be unrealistic in some domains. For instance, vaccinating the mother against a disease may have a protective effect on her child, and vice versa. A notable exception is the work by \citet{OgburnandVanderWeele2014}, who distinguish three causal mechanisms that may give rise to interference and show how to model them with directed and acyclic graphs (DAGs). In this paper, we focus on what the authors call interference by contagion: One individual's treatment does not affect another individual's outcome directly but via the first individual's outcome. \citet{OgburnandVanderWeele2014} also argue that interference by contagion typically involves feedback among different individuals' outcomes over time and, thus, it may be modeled by a DAG over the random variables of interest instantiated at different time points. Sometimes, however, the variables are observed at one time point only, e.g. at the end of a season. The observed variables may then be modeled by a Lauritzen-Wermuth-Frydenberg chain graph, or LWF CG for short \citep{Frydenberg1990b,Lauritzen1996}: Directed edges represent direct causal effects, and undirected edges represents causal effects due to interference. Some notable papers on LWF CGs for modeling interference by contagion are \citet{Ogburnetal.2018}, \citet{Shpitser2015},\footnote{\citet{Shpitser2015} actually introduces segregated graphs, an extension of LWF CGs with bidirected edges to allow for confounding. Since we do not consider confounding in this paper, segregated graphs reduce to LWF CGs.} \citet{Shpitseretal.2017}, and \citet{Tchetgenetal.2017}. For instance, the previous mother-child example may be modeled with the LWF CG in Figure \ref{fig:models} (a), where $V_1$ and $V_2$ represent the dose of the vaccine administered to the mother and the child, and $D_1$ and $D_2$ represent the severity of the disease. This paper only deals with Gaussian distributions, which implies that the relations between the random variables are linear. Note that the edge $D_1 - D_2$ represents a symmetric relationship but has some features of a causal relationship, in the sense that a change in the severity of the disease for the mother causes a change in severity for the child and vice versa. This seems to suggest that we can interpret the undirected edges in LWF CGs as feedback loops. That is, that every undirected edge can be replaced by two directed edges in opposite directions. However, this is not correct in general, as explained at length by \citet{LauritzenandRichardson2002}.

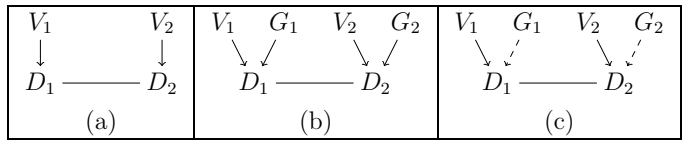
\begin{figure}[t]
\begin{center}
\scalebox{0.8}{
\begin{tabular}{|c|c|c|}
\hline
\begin{tikzpicture}[inner sep=1mm]
\node at (0,0) (V1) {$V_1$};
\node at (0,-1) (H1) {$D_1$};
\node at (2,0) (V2) {$V_2$};
\node at (2,-1) (H2) {$D_2$};
\path[->] (V1) edge (H1);
\path[->] (V2) edge (H2);
\path[-] (H1) edge (H2);
\end{tikzpicture}
&
\begin{tikzpicture}[inner sep=1mm]
\node at (0,0) (V1) {$V_1$};
\node at (1,0) (G1) {$G_1$};
\node at (0.5,-1) (H1) {$D_1$};
\node at (2,0) (V2) {$V_2$};
\node at (3,0) (G2) {$G_2$};
\node at (2.5,-1) (H2) {$D_2$};
\path[->] (V1) edge (H1);
\path[->] (G1) edge (H1);
\path[->] (V2) edge (H2);
\path[->] (G2) edge (H2);
\path[-] (H1) edge (H2);
\end{tikzpicture}
&
\begin{tikzpicture}[inner sep=1mm]
\node at (0,0) (V1) {$V_1$};
\node at (1,0) (G1) {$G_1$};
\node at (0.5,-1) (H1) {$D_1$};
\node at (2,0) (V2) {$V_2$};
\node at (3,0) (G2) {$G_2$};
\node at (2.5,-1) (H2) {$D_2$};
\path[->] (V1) edge (H1);
\path[->,dashed] (G1) edge (H1);
\path[->] (V2) edge (H2);
\path[->,dashed] (G2) edge (H2);
\path[-] (H1) edge (H2);
\end{tikzpicture}
\\
(a) & (b) & (c)\\
\hline
\end{tabular}}
\end{center}\caption{Models of the mother-child example: (a, b) LWF CGs, and (c) UCG.}\label{fig:models}
\end{figure}

In some domains, we may need to model both interference and non-interference. This is the problem that we address in this paper. In our previous example, for instance, the mother and the child may have a gene that makes them healthy carriers, i.e. the higher the expression level of the gene the less the severity of the disease but the load of the disease agent (e.g., a virus) remains unaltered and, thus, so does the risk of infecting the other. We may model this situation with the LWF CG in Figure \ref{fig:models} (b), where $G_1$ and $G_2$ represent the expression level of the healthy carrier gene in the mother and the child. However, this model is not correct: In reality, the mother's healthy carrier gene protects her but has no protective effect on the child, and vice versa. This non-interference relation is not represented by the model. In other words, LWF CGs are not expressive enough to model both interference relations (e.g., intervening on $V_1$ must have an effect on $D_2$) and non-interference relations (e.g., intervening on $G_1$ must have no effect on $D_2$). To remedy this problem, we propose to combine LWF CGs with Andersson-Madigan-Perlman chain graphs, or AMP CGs for short \citep{Anderssonetal.2001}. We call these new models unified chain graphs (UCGs). As we will show, it is possible to describe how UCGs exactly model interference in the Gaussian case. Works such as \citet{Ogburnetal.2018}, \citet{Shpitser2015}, \citet{Shpitseretal.2017}, and \citet{Tchetgenetal.2017} use LWF CGs to model interference and compute some causal effect of interest. However, they do not describe how interference is exactly modeled.

The rest of the paper is organized as follows. Section \ref{sec:preliminaries} introduces some notation and definitions. Section \ref{sec:ucgs} defines UCGs formally. Sections \ref{sec:global} and \ref{sec:local} define global, local and pairwise Markov properties for UCGs and prove their equivalence. Section \ref{sec:MLEs} proposes an algorithm for maximum likelihood parameter estimation for UCGs, and reports experimental results. Section \ref{sec:docalculus} considers causal inference in UCGs. Section \ref{sec:identifiability} discusses identifiability of LWF and AMP CGs. Section \ref{sec:discussion} closes the paper with some discussion. The formal proofs of all the results are contained in Appendix A.

\section{Preliminaries}\label{sec:preliminaries}

In this paper, set operations like union, intersection and difference have the same precedence. When several of them appear in an expression, they are evaluated left to right unless parentheses are used to indicate a different order. Unless otherwise stated, the graphs in this paper are defined over a finite set of nodes $V$. Each node represents a random variable. We assume that the random variables are jointly normally distributed. The graphs contain at most one edge between any pair of nodes. The edge may be undirected or directed. We consider two types of directed edges: Solid ($\ra$) and dashed ($\dra$).

The parents of a set of nodes $X$ in a graph $G$ is the set $Pa(X) = \{A | A \ra B$ or $A \dra B$ is in $G$ with $B \in X \}$. The neighbors of $X$ is the set $Ne(X) = \{A | A - B$ is in $G$ with $B \in X \}$. The adjacents of $X$ is the set $Ad(X) = \{A | A \to B$ or $B \to A$ is in $G$ with $B \in X \}$ where $\to$ means $\ra$, $\dra$ or $-$. The non-descendants of $X$ are $Nd(X) = \{B | $ neither $A \to \cdots \ra \cdots \to B$ nor $A \to \cdots \dra \cdots \to B$ is in $G$ with $A \in X \}$. Moreover, the subgraph of $G$ induced by $X$ is denoted by $G_X$. A route between a node $V_{1}$ and a node $V_{n}$ in $G$ is a sequence of (not necessarily distinct) nodes $V_{1}, \ldots, V_{n}$ such that $V_i \in Ad(V_{i+1})$ for all $1 \leq i < n$. Moreover, $V_j, \ldots, V_{j+k}$ and $V_{j+k}, \ldots, V_{j}$ with $1 \leq j \leq n-k$ are called subroutes of the route. The route is called undirected if $V_i - V_{i+1}$ for all $1 \leq i < n$. If $V_n=V_1$, then the route is called a cycle. A cycle is called semidirected if it is of the form $V_1 \ra V_2 \no \cdots \no V_n$ or $V_1 \dra V_2 \no \cdots \no V_n$. A route of distinct nodes is called a path. A chain graph (CG) is a graph with (possibly) directed and undirected edges, and without semidirected cycles. A set of nodes of a CG $G$ is connected if there exists an undirected route in $G$ between every pair of nodes in the set. A chain component of $G$ is a maximal connected set. Note that the chain components of $G$ can be sorted topologically, i.e. for every edge $A \ra B$ or $A \dra B$ in $G$, the component containing $A$ precedes the component containing $B$. The chain components of $G$ are denoted by $Cc(G)$. CGs without dashed directed edges are known as LWF CGs, whereas CGs without solid directed edges are known as AMP CGs. 

We now recall the interpretation of LWF CGs.\footnote{LWF CGs were originally interpreted via the so-called moralization criterion \citep{Frydenberg1990b,Lauritzen1996}. \citet[Lemma 5.1]{Studeny1998} introduced the so-called separation criterion that we use in this paper and proved its equivalence to the moralization criterion.} Given a route $\rho$ in a LWF CG $G$, a section of $\rho$ is a maximal undirected subroute of $\rho$. A section $C_1 - \cdots - C_n$ of $\rho$ is called a collider section if $A \ra C_1 - \cdots - C_n \la B$ is a subroute of $\rho$. We say that $\rho$ is $Z$-open with $Z \subseteq V$ when (i) all the collider sections in $\rho$ have some node in $Z$, and (ii) all the nodes that are outside the collider sections in $\rho$ are outside $Z$. We now recall the interpretation of AMP CGs.\footnote{\citet{Anderssonetal.2001} originally interpreted AMP CGs via the so-called augmentation criterion. \citet[Theorem 4.1]{Levitzetal.2001} introduced the so-called p-separation criterion and proved its equivalence to the augmentation criterion. \citet[Theorem 2]{Penna2016} introduced the route-based criterion that we use in this paper and proved its equivalence to the p-separation criterion.} Given a route $\rho$ in an AMP CG $G$, $C$ is a collider node in $\rho$ if $\rho$ has a subroute $A \dra C \dla B$ or $A \dra C - B$. We say that $\rho$ is $Z$-open with $Z \subseteq V$ when (i) all the collider nodes in $\rho$ are in $Z$, and (ii) all the non-collider nodes in $\rho$ are outside $Z$. Let $X$, $Y$ and $Z$ denote three disjoint subsets of $V$. When there is no $Z$-open route in a LWF or AMP CG $G$ between a node in $X$ and a node in $Y$, we say that $X$ is separated from $Y$ given $Z$ in $G$ and denote it as $X \ci_G Y | Z$. Moreover, we represent by $X \ci_p Y | Z$ that $X$ and $Y$ are conditionally independent given $Z$ in a probability distribution $p$. We say that $p$ satisfies the global Markov property with respect to $G$ when $X \ci_p Y | Z$ for all $X, Y, Z \subseteq V$ such that $X \ci_G Y |Z$. When $X \ci_p Y | Z$ if and only if $X \ci_G Y | Z$, we say that $p$ is faithful to $G$. Two LWF CGs or AMP CGs are Markov equivalent if the set of distributions that satisfy the global Markov property with respect to each CG is the same. Characterizations of Markov equivalence exist. See \citet[Theorem 5.6]{Frydenberg1990b} for LWF CGs, and \citet[Theorem 5]{Anderssonetal.2001} for AMP CGs. The following theorem gives an additional characterization.

\begin{theorem}\label{the:eq}
Two LWF CGs or AMP CGs $G$ and $H$ are Markov equivalent if and only if they represent the same separations.
\end{theorem}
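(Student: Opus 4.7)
The plan is to establish the two implications separately, with the reverse direction being the substantive one. For the direction ``same separations implies Markov equivalent,'' the global Markov property is defined purely in terms of the separations represented by the CG, so if $G$ and $H$ represent the same separations then a distribution $p$ satisfies the global Markov property with respect to $G$ if and only if it does so with respect to $H$; hence the two graphs admit exactly the same Markov distributions.

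For the converse, I would argue by contradiction: assume $G$ and $H$ are Markov equivalent but, without loss of generality, there exist disjoint $X, Y, Z \subseteq V$ with $X \ci_G Y \mid Z$ but $X \nci_H Y \mid Z$. The plan is to invoke the existence of a Gaussian distribution $p$ that is faithful to $H$. Such faithfulness results are available both for LWF CGs (via the standard construction used to establish strong completeness of the separation criterion) and for AMP CGs (via the route-based criterion reference cited in the excerpt, \citet{Penna2016}, together with the $p$-separation equivalence). Given such a $p$, we have $X \nci_p Y \mid Z$ because of faithfulness, but simultaneously $p$ satisfies the global Markov property with respect to $H$; Markov equivalence transfers this to $G$, so $X \ci_G Y \mid Z$ would force $X \ci_p Y \mid Z$, a contradiction. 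Swapping the roles of $G$ and $H$ gives equality of the separations.

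The main obstacle is the appeal to the existence of a faithful Gaussian distribution. For AMP CGs this is well documented, and for LWF CGs the linear/Gaussian faithfulness construction (choosing generic regression and concentration parameters so that algebraic cancellations giving extra independences form a measure-zero set) is standard, but it is the only nontrivial ingredient. Once available, the argument is a clean two-line contradiction. Everything else is definition-chasing and symmetry between $G$ and $H$.
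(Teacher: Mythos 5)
Your proposal is correct and follows essentially the same route as the paper: the forward direction is definitional, and the reverse direction hinges on the existence of a Gaussian distribution faithful to each graph (the paper cites \citet[Theorem 6.1]{Levitzetal.2001} for AMP CGs and defers to \citet{Penna2011} for LWF CGs), combined with the transfer of the global Markov property across Markov equivalent graphs. Your contradiction phrasing versus the paper's direct phrasing is a cosmetic difference only.
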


Finally, let $X$, $Y$, $W$ and $Z$ be disjoint subsets of $V$. A probability distribution $p$ that satisfies the following five properties is called graphoid: Symmetry $X \ci_p Y | Z \Rightarrow Y \ci_p X | Z$, decomposition $X \ci_p Y \cup W | Z \Rightarrow X \ci_p Y | Z$, weak union $X \ci_p Y \cup W | Z \Rightarrow X \ci_p Y | Z \cup W$, contraction $X \ci_p Y | Z \cup W \land X \ci_p W | Z \Rightarrow X \ci_p Y \cup W | Z$, and intersection $X \ci_p Y | Z \cup W \land X \ci_p W | Z \cup Y \Rightarrow X \ci_p Y \cup W | Z$. If $p$ also satisfies the following property, then it is called compositional graphoid: Composition $X \ci_p Y | Z \land X \ci_p W | Z \Rightarrow X \ci_p Y \cup W | Z$. It is known that every Gaussian distribution is a compositional graphoid \citep[Lemma 2.1, Proposition 2.1 and Corollary 2.4]{Studeny2005}.

\section{Unified Chain Graphs}\label{sec:ucgs}

In this section, we introduce our causal models to represent both interference and non-interference relationships. Let $p$ denote a Gaussian distribution that satisfies the global Markov property with respect to a LWF or AMP CG $G$. Then,
\begin{equation}\label{eq:recursion1}
p(V)= \prod_{K \in Cc(G)} p(K|Pa(K))
\end{equation}
because $K \ci_G K_1 \cup \cdots \cup K_n \setminus Pa(K) | Pa(K)$ where $K_1, \ldots, K_n$ denote the chain components of $G$ that precede $K$ in an arbitrary topological ordering of the components. Assume without loss of generality that $p$ has zero mean vector. Moreover, let $\Sigma$ and $\Omega$ denote respectively the covariance and precision matrices of $p(K,Pa(K))$. Each matrix is then of dimension $(|K|+|Pa(K)|) \times (|K|+|Pa(K)|)$. \citet[Section 2.3.1]{Bishop2006} shows that the conditional distribution $p(K | Pa(K))$ is Gaussian with covariance matrix $\Lambda_K$ and the mean vector is a linear function of $Pa(K)$ with coefficients $\beta_K$. Then, $\beta_K$ is of dimension $|K| \times |Pa(K)|$, and $\Lambda_K$ is of dimension $|K| \times |K|$. Moreover, $\beta_K$ and $\Lambda_K$ can be expressed in terms of $\Sigma$ and $\Omega$ as follows:
\begin{equation}\label{eq:recursion2}
K | Pa(K) \sim \mathcal{N}(\beta_K Pa(K), \Lambda_K)
\end{equation}
with
\begin{equation}\label{eq:beta}
\beta_K= \Sigma_{K,Pa(K)} \Sigma_{Pa(K),Pa(K)}^{-1} = -\Omega_{K,K}^{-1} \Omega_{K,Pa(K)}
\end{equation}
and
\begin{equation}\label{eq:lambda}
\Lambda_K= \Sigma_{K|Pa(K)} = \Sigma_{K,K} - \Sigma_{K,Pa(K)}\Sigma_{Pa(K),Pa(K)}^{-1}\Sigma_{Pa(K),K} = \Omega_{K,K}^{-1}
\end{equation}
where $\Sigma_{K,Pa(K)}$ denotes the submatrix of $\Sigma$ with rows $K$ and columns $Pa(K)$, and $\Sigma_{Pa(K),Pa(K)}^{-1}$ denotes the inverse of $\Sigma_{Pa(K),Pa(K)}$. Moreover, $(\Lambda_K^{-1})_{i,j}=0$ for all $i,j \in K$ such that $i - j$ is not in $G$, because $i \ci_G j | K \setminus \{i, j\} \cup Pa(K)$ \citep[Proposition 5.2]{Lauritzen1996}.

Furthermore, if $G$ is an AMP CG, then $(\beta_K)_{i,j}=0$ for all $i \in K$ and $j \in Pa(K) \setminus Pa(i)$, because $i \ci_G Pa(K) \setminus Pa(i) | Pa(i)$. Therefore, the directed edges of an AMP CG are suitable for representing non-interference. On the other hand, the directed edges of a LWF CG are suitable for representing interference. To see it, note that if $G$ is a LWF CG, then $\Omega_{i,j}=0$ for all $i \in K$ and $j \in Pa(K) \setminus Pa(i)$, because $i \ci_G j | K \setminus \{i\} \cup Pa(K) \setminus \{j\}$. However, $(\beta_K)_{i,j}$ is not identically zero. For instance, if $G=\{1 \ra 3 - 4 \la 2\}$ and $K=\{3,4\}$, then
\begin{align*}
(\beta_K)_{4,1} &= -(\Omega_{K,K}^{-1})_{4,3} (\Omega_{K,Pa(K)})_{3,1} -(\Omega_{K,K}^{-1})_{4,4} (\Omega_{K,Pa(K)})_{4,1}\\
&= -(\Omega_{K,K}^{-1})_{4,3} (\Omega_{K,Pa(K)})_{3,1}
\end{align*}
because $(\Omega_{K,Pa(K)})_{4,1}=0$ as $1 \ra 4$ is not in $G$. In other words, if there is a path in a LWF CG $G$ from $j$ to $i$ through nodes in $K$, then $(\beta_K)_{i,j}$ is not identically zero. Actually, $(\beta_K)_{i,j}$ can be written as a sum of path weights over all such paths. This follows directly from Equation \ref{eq:beta} and the result by \citet[Theorem 1]{JonesandWest2005}, who show that $(\Omega_{K,K}^{-1})_{i,l}$ can be written as a sum of path weights over all the paths in $G$ between $l$ and $i$ through some (but not necessarily all) nodes in $K$. Specifically,
\begin{equation}\label{eq:jones}
(\Omega_{K,K}^{-1})_{i,l} = \sum_{\rho \in \pi_{i,l}} (-1)^{|\rho|+1} \frac{|(\Omega_{K,K})_{\setminus \rho}|}{|\Omega_{K,K}|} \prod_{n=1}^{|\rho|-1} (\Omega_{K,K})_{\rho_n, \rho_{n+1}}
\end{equation}
where $\pi_{i,l}$ denotes the set of paths in $G$ between $i$ and $l$ through nodes in $K$, $|\rho|$ denotes the number of nodes in a path $\rho$, $\rho_n$ denotes the $n$-th node in $\rho$, and $(\Omega_{K,K})_{\setminus \rho}$ is the matrix with the rows and columns corresponding to the nodes in $\rho$ omitted. Moreover, the determinant of a zero-dimensional matrix is taken to be 1. As a consequence, a LWF CG $G$ does not impose zero restrictions on $\beta_K$, because $K$ is connected by definition of chain component. Previous works on the use of LWF CGs to model interference (e.g., \citep{Ogburnetal.2018,Shpitser2015,Shpitseretal.2017,Tchetgenetal.2017}) focus on developing methods for computing some causal effects of interest, and do not give many details on how interference is really being modeled. In the case of Gaussian LWF CGs, Equations \ref{eq:beta} and \ref{eq:jones} shed some light on this question. For instance, consider extending the mother-child example in Section \ref{sec:motivation} to a group of friends. The vaccine for individual $j$ has a protective effect on individual $j$ developing the disease, which in its turn has a protective effect on her friends, which in its turn has a protective effect on her friends' friends, and so on. Moreover, the protective effect also works in the reverse direction, i.e. the vaccine for the latter has a protective effect on individual $j$. In other words, the protective effect of the vaccine for individual $j$ on individual $i$ is the result of all the paths for the vaccine's effect to reach individual $i$ through the network of friends. This is exactly what Equations \ref{eq:beta} and \ref{eq:jones} tell us (assuming that the neighbors of an individual's disease node in $G$ are her friends' disease nodes). Appendix B gives an alternative decomposition of the covariance in terms of path coefficients, which may give further insight into Equation \ref{eq:jones}.

The discussion above suggests a way to model both interference and non-interference by unifying LWF and AMP CGs, i.e. by allowing $\ra$, $\dra$ and $-$ edges as long as no semidirected cycle exists. We call these new models unified chain graphs (UCGs). The lack of an edge $-$ imposes a zero restriction on the elements of $\Omega_{K,K}$, as in LWF and AMP CGs. The lack of an edge $\dra$ in an UCG imposes a zero restriction on the elements of $\beta_K$, as in AMP CGs. Finally, the lack of an edge $\ra$ imposes a zero restriction on the elements of $\Omega_{K,Pa(K)}$ but not on the elements of $\beta_K$, as in LWF CGs. Therefore, edges $\dra$ may be used to model non-interference, whereas edges $\ra$ may be used to model interference. For instance, the mother-child example in Section \ref{sec:motivation} may be modeled with the UCG in Figure \ref{fig:models} (c).

Every UCG is a CG, but the opposite is not true due to the following requirement. We divide the parents of a set of nodes $X$ in an UCG $G$ into mothers and fathers as follows. The fathers of $X$ are $Fa(X) = \{B | B \ra A$ is in $G$ with $A \in X \}$. The mothers of $X$ are $Mo(X) = \{B | B \dra A$ is in $G$ with $A \in X \}$. We require that $Fa(K) \cap Mo(K) = \emptyset$ for all $K \in Cc(G)$. Therefore, the lack of an edge $\dra$ imposes a zero restriction on the elements of $\beta_K$ corresponding to the mothers, and the lack of an edge $\ra$ imposes a zero restriction on the elements of $\Omega_{K,Pa(K)}$ corresponding to the fathers. In other words, $G$ imposes the following zero restrictions:
\begin{equation}\label{eq:zerom}
(\beta_K)_{i,j}=0 \text{ for all } i \in K \text{ and } j \in Mo(K) \setminus Mo(i),
\end{equation}
\begin{equation}\label{eq:zerof}
\Omega_{i,j}=0 \text{ for all } i \in K \text{ and } j \in Fa(K) \setminus Fa(i), \text{ and}
\end{equation}
\begin{equation}\label{eq:zeron}
(\Lambda_K^{-1})_{i,j}=0 \text{ for all } i \in K \text{ and } j \in K \setminus Ne(i).
\end{equation}

The reason why we require that $Fa(K) \cap Mo(K) = \emptyset$ is to avoid imposing contradictory zero restrictions on $\beta_K$, e.g. the edge $j \ra i$ excludes the edge $j \dra i$ by definition of CG, which implies that $(\beta_K)_{i,j}$ is identically zero, but $j \ra i$ implies the opposite. In other words, without this constraint, UCGs would reduce to AMP CGs. The following lemma formalizes this statement.

\begin{lemma}\label{lem:requirement}
Without the requirement that $Fa(K) \cap Mo(K) = \emptyset$ for all $K \in Cc(G)$, the UCG $G$ imposes the same zero restrictions on $\beta_K$ and $\Lambda_K^{-1}$ as the AMP CG resulting from removing the edges $\ra$ from $G$.
\end{lemma}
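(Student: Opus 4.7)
The plan is to compare directly the zero restrictions prescribed on $\beta_K$ and $\Lambda_K^{-1}$ by Equations \ref{eq:zerom} and \ref{eq:zeron} for the UCG $G$ against the AMP restrictions for the graph $H$ obtained from $G$ by deleting every solid directed edge. First I would verify that $H$ really is an AMP CG: by construction it contains only $\dra$ and $-$ edges, and being a subgraph of the CG $G$ it inherits the absence of semidirected cycles, so the AMP CG definitions of Section \ref{sec:preliminaries} apply to it.

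The second step is to observe that the combinatorial objects entering the relevant restrictions depend only on edges that survive the deletion. The $-$ edges are identical in $G$ and $H$, so $Cc(H)=Cc(G)$ and $Ne_H(i)=Ne_G(i)$ for every $i$; the $\dra$ edges are also unchanged, so $Mo_H=Mo_G$; and because $H$ has no $\ra$ edges at all, $Pa_H = Mo_H = Mo_G$ and $Fa_H = \emptyset$. Consequently, for every $K \in Cc(G)=Cc(H)$ and every $i \in K$, the index set $K \setminus Ne_G(i)$ appearing in Equation \ref{eq:zeron} equals $K \setminus Ne_H(i)$, and the index set $Mo_G(K) \setminus Mo_G(i)$ appearing in Equation \ref{eq:zerom} equals $Pa_H(K) \setminus Pa_H(i)$. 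Since $H$ is an AMP CG, these are precisely the index pairs where the AMP interpretation recorded in Section \ref{sec:preliminaries} forces $\Lambda_K^{-1}$ and $\beta_K$ to vanish, so the two lists of zero restrictions coincide entry by entry.

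The one subtlety worth articulating, which I anticipate being the main conceptual point rather than a technical obstacle, is that in the absence of the requirement $Fa(K) \cap Mo(K) = \emptyset$ the matrix $\beta_K$ associated with $G$ has columns indexed by $Pa_G(K) = Fa_G(K) \cup Mo_G(K)$, which can strictly contain the column index set $Pa_H(K) = Mo_G(K)$ used for $\beta_K$ in $H$. Equation \ref{eq:zerom}, however, only constrains columns in $Mo_G(K)$, so the extra pure-father columns carry no direct zero restriction from Equation \ref{eq:zerom}; Equation \ref{eq:zerof} does constrain entries of $\Omega$ on those columns, but those restrictions apply to $\Omega$ and not to $\beta_K$ or $\Lambda_K^{-1}$, and therefore fall outside the scope of the lemma. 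With this caveat noted, the proof reduces to the bookkeeping carried out above.
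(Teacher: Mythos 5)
Your bookkeeping for $\Lambda_K^{-1}$ and for the mother-indexed columns of $\beta_K$ is fine, but the ``subtlety'' you set aside in your last paragraph is the entire content of the lemma, and the way you dispose of it breaks the proof. Under your literal reading of Equation \ref{eq:zerom}, a pure father $j \in Fa(K) \setminus Mo(K)$ contributes a column of $\beta_K$ that carries no zero restriction at all in $G$ (and which, via Equations \ref{eq:zerof} and \ref{eq:beta}, is generically nonzero), whereas in the AMP CG $H$ the node $j$ is not a parent of $K$ at all, so the corresponding regression coefficients are forced to vanish. Declaring those columns ``outside the scope of the lemma'' does not make the two lists of zero restrictions coincide; it makes them provably different whenever $Fa(K) \not\subseteq Mo(K)$, and it contradicts the sentence immediately after the lemma, which asserts that $G$ and $H$ can model the same Gaussian distributions. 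A telling symptom is that your argument never uses the hypothesis that the requirement $Fa(K) \cap Mo(K) = \emptyset$ has been dropped: everything you establish holds verbatim with the requirement in place, where the lemma's conclusion is certainly false --- that is precisely why UCGs are strictly more expressive than AMP CGs.

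The paper's proof closes this gap with one interpretive step that you omit: once the requirement is dropped, no distinction between mothers and fathers is made, so the rule ``a missing $\dra$ edge imposes a zero on $\beta_K$'' is applied to every parent in $Pa(K)$, not only to those in $Mo(K)$. Then for any edge $j \ra i$ the absence of $j \dra i$ forces $(\beta_K)_{i,j}=0$, every father-indexed column of $\beta_K$ is zeroed out, and the $\ra$ edges contribute nothing to the zero pattern of $\beta_K$ or $\Lambda_K^{-1}$ beyond what $H$ already imposes. That is the step your proposal needs; with it, the bookkeeping in your second paragraph does complete the argument.
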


The lemma above implies that the UCG and the AMP CG can model the same Gaussian distributions. Lastly, note that the zero restrictions associated with an UCG (i.e., Equations \ref{eq:zerom}-\ref{eq:zeron}) induce a Gaussian distribution by Equation \ref{eq:recursion1} and \citet[Section 2.3.3]{Bishop2006}.

\section{Global Markov Property}\label{sec:global}

In this section, we present a separation criterion for UCGs. Given a route $\rho$ in an UCG, $C$ is a collider node in $\rho$ if $\rho$ has a subroute $A \dra C \dla B$ or $A \dra C - B$ or $A \dra C \la B$. A section of $\rho$ is a maximal undirected subroute of $\rho$. A section $C_1 - \cdots - C_n$ of $\rho$ is called a collider section if $A \ra C_1 - \cdots - C_n \la B$ is a subroute of $\rho$. We say that $\rho$ is $Z$-open with $Z \subseteq V$ when (i) all the collider nodes in $\rho$ are in $Z$, (ii) all the collider sections in $\rho$ have some node in $Z$, and (iii) all the nodes that are outside the collider sections in $\rho$ and are not collider nodes in $\rho$ are outside $Z$. Let $X$, $Y$ and $Z$ denote three disjoint subsets of $V$. When there is no $Z$-open route in $G$ between a node in $X$ and a node in $Y$, we say that $X$ is separated from $Y$ given $Z$ in $G$ and denote it as $X \ci_G Y | Z$. Note that this separation criterion unifies the criteria for LWF and AMP CGs reviewed in Section \ref{sec:ucgs}. Finally, we say that a probability distribution $p$ satisfies the global Markov property with respect to an UCG $G$ if $X \ci_p Y | Z$ for all $X, Y, Z \subseteq V$ such that $X \ci_G Y |Z$.

The next theorem states the equivalence between the global Markov property and the zero restrictions associated with an UCG.

\begin{theorem}\label{the:zg}
A Gaussian distribution $p$ satisfies Equations \ref{eq:recursion1} and \ref{eq:zerom}-\ref{eq:zeron} with respect to an UCG $G$ if and only if it satisfies the global Markov property with respect to $G$.
\end{theorem}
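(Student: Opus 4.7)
The plan is to prove each direction separately, reducing to the known global-Markov equivalences for LWF CGs and AMP CGs, with the chain-component factorization in Equation \ref{eq:recursion1} as the backbone and the compositional graphoid property of Gaussians as the glue.

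For the easy direction (global Markov $\Rightarrow$ zero restrictions), I would first note that Equation \ref{eq:recursion1} follows from the UCG separations $K \ci_G K_1 \cup \cdots \cup K_n \setminus Pa(K) | Pa(K)$ for any topological ordering of the chain components; these separations are immediate from the UCG criterion since every route from $K$ to a preceding component that avoids $Pa(K)$ must contain either a non-collider outside $Z=Pa(K)$ or a collider whose descendants miss $Z$. Each of Equations \ref{eq:zerom}--\ref{eq:zeron} then corresponds to a specific pairwise separation in $G$ that I would verify directly from the UCG separation criterion, and translate into a zero entry of $\beta_K$, $\Omega$, or $\Lambda_K^{-1}$ via the standard Gaussian formulas \ref{eq:beta}--\ref{eq:lambda}.

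For the hard direction (zero restrictions $\Rightarrow$ global Markov), the strategy is to work chain-component by chain-component. Given the factorization in Equation \ref{eq:recursion1}, the global Markov property on $G$ reduces, by decomposition and weak union, to conditional independencies within each $p(K | Pa(K))$ that are implied by the separations of $G$ restricted to $K \cup Pa(K)$. For each chain component $K$ with $Pa(K) = Fa(K) \cup Mo(K)$ (disjoint by assumption), I would introduce two auxiliary local graphs: a LWF CG $G^L_K$ on $K \cup Fa(K)$ consisting of the $\ra$ edges into $K$ together with $G_K$, and an AMP CG $G^A_K$ on $K \cup Mo(K)$ consisting of the $\dra$ edges into $K$ together with $G_K$. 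Equations \ref{eq:zerof} and \ref{eq:zeron} coincide with the LWF pairwise Markov property for $p(K | Pa(K))$ viewed as a LWF distribution on $G^L_K$ (with $Mo(K)$ held as context), while Equations \ref{eq:zerom} and \ref{eq:zeron} coincide with the AMP pairwise Markov property on $G^A_K$ (with $Fa(K)$ held as context). Because Gaussian distributions are compositional graphoids, each pairwise property upgrades to the corresponding global Markov property on its local graph; composition then allows us to merge the LWF-style and AMP-style independencies into one family of independencies matching the UCG separation criterion restricted to $K \cup Pa(K)$. Reassembling across chain components via Equation \ref{eq:recursion1} and contraction yields the global Markov property on $G$.

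The main obstacle will be establishing that the UCG separation criterion on $G$ is precisely recovered by combining the LWF criterion on each $G^L_K$ with the AMP criterion on each $G^A_K$. Concretely, one must show that a $Z$-open route in $G$ decomposes into pieces inside individual chain components that each constitute $Z$-open routes under the appropriate local criterion, handling carefully the interfaces between chain components (where a route enters through a $\ra$ or $\dra$ edge) and the dual roles that a node inside $K$ may play as a collider node (AMP-style) at the meeting of two $\dra$ edges or as a member of a collider section (LWF-style) bordered by $\ra$ edges. The disjointness constraint $Fa(K) \cap Mo(K) = \emptyset$ is what prevents these two roles from conflicting on the same edge and is therefore essential to making the merge well-defined.
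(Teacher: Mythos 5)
Your easy direction matches the paper's: each of Equations \ref{eq:recursion1} and \ref{eq:zerom}--\ref{eq:zeron} is read off from an explicit separation in $G$ via Equations \ref{eq:beta}--\ref{eq:lambda}. (One small caution: your justification of $K \ci_G K_1 \cup \cdots \cup K_n \setminus Pa(K) \mid Pa(K)$ invokes ``a collider whose descendants miss $Z$,'' which is the d-separation phrasing; the route-based criterion used here requires collider nodes to lie \emph{in} $Z$ and collider sections to meet $Z$. The separations are still correct, but argue them with the right criterion.)

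The hard direction has a genuine gap, and it is exactly at the point you flag as ``the main obstacle.'' Two problems. First, the local reduction is not as clean as stated: Equation \ref{eq:zerof} is a zero in the precision matrix $\Omega$ of the \emph{joint} $p(K, Pa(K))$, i.e.\ an independence $i \ci_p j \mid K \cup Pa(K) \setminus \{i,j\}$ whose conditioning set includes $Mo(K)$. Marginalizing out $Mo(K)$ to obtain a distribution on $K \cup Fa(K)$ takes a Schur complement and does not preserve zeros of the precision matrix, so ``$p(K\mid Pa(K))$ viewed as a LWF distribution on $G^L_K$'' is not directly licensed by the stated restrictions. Second, and more seriously, the merge step is the entire content of the theorem and is not carried out. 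The LWF-derived and AMP-derived independencies come with \emph{different} conditioning sets (compare $Fa(K)\cup Mo(i)$ versus $K\setminus\{i\}\cup Fa(i)\cup Mo(K)$), so composition---which requires a common conditioning set---cannot combine them; and an arbitrary global separation $X \ci_G Y \mid Z$ has a conditioning set that matches neither family and spans many components. The paper's proof supplies the missing idea: it defines a closure-type subgraph $G^{X\cup Y\cup Z}$, extends $X$ and $Y$ to a partition $X', Y', Z$ of its nodes (Lemma \ref{lem:At3}), shows that separation forces the absence of \emph{pure collider routes} between $X'$ and $Y'$ (Lemma \ref{lem:pure}), proves by induction over chain components that absence of pure collider routes forces a zero in the precision matrix of the joint over all components (Lemma \ref{lem:zero}), and only then applies intersection, contraction and decomposition. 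Note also that in the paper the block-recursive-to-global implication (your intended conclusion from local data) is itself proved \emph{via} Theorem \ref{the:zg}, so your plan runs in the wrong direction: without an independent argument for the merge, the proposal does not close.
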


We have mentioned before that \citet{JonesandWest2005} prove that the covariance between two nodes $i$ and $j$ can be written as a sum of path weights over the paths between $i$ and $j$ in a certain undirected graph (recall Equation \ref{eq:jones} for the details). \citet{Wright1921} proves a similar result for DAGs. The following theorem generalizes both results.

\begin{theorem}\label{the:JonesWestWright}
Given a Gaussian distribution that satisfies the global Markov property with respect to an UCG $G$, the covariance between two nodes $i$ and $j$ of $G$ can be written as a sum of products of weights over the edges of the open paths between $i$ and $j$ in $G$.
\end{theorem}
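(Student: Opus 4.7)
The plan is to combine the Jones-West formula (\ref{eq:jones}) within each chain component with the recursive factorization (\ref{eq:recursion1}) across components, so as to assemble a path expansion of $\Sigma_{i,j}$ over the whole UCG $G$. Fix a topological order $K_1, \ldots, K_n$ of the chain components and use (\ref{eq:recursion2}) to write $X_{K_s} = \beta_{K_s} X_{Pa(K_s)} + \epsilon_{K_s}$ with jointly independent Gaussian noises $\epsilon_{K_s} \sim \mathcal{N}(0, \Lambda_{K_s})$. For $i \in K_{s_i}$ and $j \in K_{s_j}$ with $s_i \geq s_j$, the covariance then satisfies the recursion
\[
\Sigma_{i,j} = \sum_{k \in Pa(K_{s_i})} (\beta_{K_{s_i}})_{i,k}\, \Sigma_{k,j} + \delta_{s_i, s_j}\,(\Lambda_{K_{s_i}})_{i,j},
\]
which I would unroll by induction on $\max(s_i, s_j)$.

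I would then apply Jones-West to each $(\Lambda_{K_r})_{u,v}$, writing it as a sum of path weights over paths of $-$ edges inside $K_r$. Combining Jones-West with $\beta_K = -\Omega_{K,K}^{-1} \Omega_{K, Pa(K)}$ gives, for each father $k$ of $K$, an expansion of $(\beta_K)_{i,k}$ as a sum of weights over paths in $G$ that enter $K$ through some edge $k \ra k'$ and then traverse $-$ edges inside $K$ to reach $i$; for a mother $k$ of $i$, $(\beta_K)_{i,k}$ contributes directly as the single-edge weight of $k \dra i$ (by the zero restriction (\ref{eq:zerom}) and the requirement $Fa(K) \cap Mo(K) = \emptyset$). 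Substituting these expansions into the recursion yields $\Sigma_{i,j}$ as a sum over concatenations consisting of an ascending chain of $\dla$ and $\la$ edges alternating with $-$-sections, going from $i$ and from $j$ up to a common topmost component $K_r$, and bridged inside $K_r$ by a $-$-path from Jones-West. Such concatenations are paths in $G$ whose only intra-path joins of arrows occur at the top between two outgoing arrows, which rules out both the $\dra$-pointed collider nodes and the $\ra$/$\la$-flanked collider sections forbidden by the UCG separation criterion in Section \ref{sec:global}; they are therefore precisely the paths that are open with respect to the empty conditioning set. A reverse accounting shows that each open path between $i$ and $j$ in $G$ has a unique such up/top/down decomposition and so arises exactly once in the expansion, with its weight equal to a product of per-edge weights (entries of $\Omega_{K,K}$ for $-$ edges, entries of $\Omega_{K, Pa(K)}$ for $\ra$ edges, and entries of $\beta_K$ for $\dra$ edges).

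The hard part will be the combinatorial bookkeeping that matches the algebraic expansion with the sum over open paths. Jones-West's formula carries sign factors $(-1)^{|\rho|+1}$ and determinant cofactors $|(\Omega_{K,K})_{\setminus \rho}|/|\Omega_{K,K}|$, and after substituting the $\Omega_{K,K}^{-1}$-expansion into the $\beta_K$ identity and iterating, one must verify that these cofactors either telescope along the ascending and descending pieces or absorb into the Jones-West constants of neighboring components, leaving a clean per-edge product along each resulting path. A secondary subtlety is leveraging the requirement $Fa(K) \cap Mo(K) = \emptyset$ to make each inter-component edge in an open path unambiguously either a $\dra$ (contributing a $\beta_K$ entry) or a $\ra$ (contributing an $\Omega_{K, Pa(K)}$ entry), so that the per-path product of weights is well defined. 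A natural way to organize the whole argument is to first recover the Jones-West and Wright decompositions for the pure LWF and pure AMP cases, then glue them via the chain-component decomposition of a general UCG.
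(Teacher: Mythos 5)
Your proposal follows essentially the same route as the paper's proof: induction over the topologically ordered chain components (your entrywise covariance recursion is the unrolled form of the block covariance formula the paper takes from Bishop), Jones--West applied to $\Lambda_K$ within each component, the identity $\beta_K = -\Omega_{K,K}^{-1}\Omega_{K,Pa(K)}$ to expand the inter-component coefficients as path sums, and the observation that the concatenations join only at outgoing arrows and hence are exactly the open paths. The bookkeeping you flag as the hard part (signs, determinant cofactors, path concatenations possibly revisiting nodes) is left at the same informal level in the paper's own argument, so there is no substantive divergence.
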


\section{Block-Recursive, Pairwise and Local Markov Properties}\label{sec:local}

In this section, we present block-recursive, pairwise and local Markov properties for UCGs, and prove their equivalence to the global Markov property for Gaussian distributions. Equivalence means that every Gaussian distribution that satisfies any of these properties with respect to an UCG  also satisfies the global Markov property with respect to the UCG, and vice versa. The relevance of these results stems from the fact that checking whether a distribution satisfies the global Markov property can now be performed more efficiently: We do not need to check that every global separation corresponds to a statistical independence in the distribution, it suffices to do so for the local (or pairwise or block-recursive) separations, which are typically considerably fewer. This is the approach taken by most learning algorithms based on hypothesis tests, such as the PC algorithm for DAGs \citep{Spirtesetal.2000}, and its posterior extensions to LWF CGs \citep{Studeny1997b} and AMP CGs \citep{Penna2012}. The results in this section pave the way for developing a similar learning algorithm for UCGs, something that we postpone to a future article.

We say that a probability distribution $p$ satisfies the block-recursive Markov property with respect to $G$ if for any chain component $K \in Cc(G)$ and vertex $i \in K$
\begin{itemize}
\item[(B1)] $i \ci_p Nd(K) \setminus K \setminus Fa(K) \setminus Mo(i) | Fa(K) \cup Mo(i)$,
\item[(B2)] $i \ci_p Nd(K) \setminus K \setminus Fa(i) \setminus Mo(K) | K \setminus \{i\} \cup Fa(i) \cup Mo(K)$ and
\item[(B3)] $i \ci_p j | K \setminus \{i,j\} \cup Pa(K)$ for all $j \in K \setminus \{i\} \setminus Ne(i)$.
\end{itemize}

\begin{theorem}\label{the:zb}
A Gaussian distribution $p$ satisfies Equations \ref{eq:recursion1} and \ref{eq:zerom}-\ref{eq:zeron} with respect to an UCG $G$ if and only if it satisfies the block-recursive Markov property with respect to $G$.
\end{theorem}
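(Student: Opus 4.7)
The plan is to prove both directions by first converting each zero restriction into a corresponding conditional-independence statement in $p$, then reshuffling conditioning sets via the compositional graphoid axioms which every Gaussian distribution satisfies. The two Gaussian facts that do the work are: (i) for an index $i$ and a set $S$ not containing $i$, the block $\Omega_{i,S}$ of a Gaussian joint precision vanishes iff $i$ is independent of $S$ given the remaining indices; and (ii) $(\beta)_{i,S}=0$ in the regression of a Gaussian $Y$ on a Gaussian $X$ iff $Y_i \ci X_S | X \setminus X_S$. Both directions also use the set identity $Nd(K)\setminus K \setminus Fa(K) \setminus Mo(i) = (Nd(K)\setminus K\setminus Pa(K)) \cup (Mo(K)\setminus Mo(i))$ and its analogue for (B2), which are valid because $Pa(K) = Fa(K) \cup Mo(K)$ as a disjoint union under the standing requirement $Fa(K) \cap Mo(K) = \emptyset$.

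For the forward direction, Equation \ref{eq:recursion1} yields $K \ci_p Nd(K)\setminus K\setminus Pa(K) | Pa(K)$ for every $K \in Cc(G)$ by a topological ordering of $Cc(G)$, and weak union produces its single-index restrictions with conditioning set either $Pa(K)$ or $(K\setminus\{i\}) \cup Pa(K)$. Equation \ref{eq:zerom} translates, by fact (ii) applied to $p(K|Pa(K))$ (which equals $p_V(K|Pa(K))$ under the factorization), into $i \ci_p Mo(K)\setminus Mo(i) | Fa(K)\cup Mo(i)$; contracting this with the factorization statement via the set identity yields (B1). Equation \ref{eq:zerof} translates, by fact (i) applied to the marginal $p_V(K, Pa(K))$, into $i \ci_p Fa(K)\setminus Fa(i) | (K\setminus\{i\}) \cup Fa(i) \cup Mo(K)$ (using that conditional independence given all other coordinates of a marginal coincides with the analogous statement in $p_V$); contraction with the factorization statement delivers (B2). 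Equation \ref{eq:zeron} is precisely, by fact (i) applied to $p(K|Pa(K))$, the pairwise independence stated in (B3).

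For the backward direction, decomposition of (B1) yields the two fragments $i \ci_p Nd(K)\setminus K\setminus Pa(K) | Fa(K)\cup Mo(i)$ and $i \ci_p Mo(K)\setminus Mo(i) | Fa(K) \cup Mo(i)$; contracting them and then applying weak union produces $i \ci_p Nd(K)\setminus K\setminus Pa(K) | Pa(K)$, and composition across $i \in K$ upgrades this to $K \ci_p Nd(K)\setminus K\setminus Pa(K) | Pa(K)$, from which Equation \ref{eq:recursion1} follows. The second decomposition fragment converts, via fact (ii), into Equation \ref{eq:zerom}. Decomposition of (B2) yields $i \ci_p Fa(K)\setminus Fa(i) | (K\setminus\{i\}) \cup Fa(i) \cup Mo(K)$, which by fact (i) is Equation \ref{eq:zerof}. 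And (B3) is, by fact (i) applied to $p(K|Pa(K))$, Equation \ref{eq:zeron}. The main obstacle I expect is bookkeeping rather than depth: verifying that every application of contraction, weak union, decomposition or composition has its sides disjoint from its conditioning set, that the passage between statements in $p(K|Pa(K))$ and in $p_V$ is justified by the factorization, and that the set identities linking the indices in the zero restrictions to those in (B1)-(B3) go through; the hypothesis $Fa(K) \cap Mo(K) = \emptyset$ is essential at those steps.
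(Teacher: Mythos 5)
Your proof is correct, and its backward direction (block-recursive $\Rightarrow$ Equations \ref{eq:recursion1} and \ref{eq:zerom}--\ref{eq:zeron}) coincides essentially verbatim with the paper's ``if'' part: decomposition and weak union on B1 and B2, composition over $i \in K$ to recover the factorization, and the precision/regression characterizations to land on the zero restrictions. Where you genuinely diverge is the forward direction. The paper does not argue it directly: it invokes Theorem \ref{the:zg} to pass from the zero restrictions to the \emph{global} Markov property, then reads off B1--B3 from the corresponding separations in $G$. That route is a one-liner here but leans on the heaviest result in the paper (whose proof needs Lemmas \ref{lem:At3}--\ref{lem:zero} on $G^{X\cup Y\cup Z}$ and pure collider routes). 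Your route is self-contained and elementary: translate Equation \ref{eq:zerom} via the regression-coefficient characterization into $i \ci_p Mo(K)\setminus Mo(i) \,|\, Fa(K)\cup Mo(i)$, Equation \ref{eq:zerof} via the precision characterization into $i \ci_p Fa(K)\setminus Fa(i) \,|\, K\setminus\{i\}\cup Fa(i)\cup Mo(K)$, and contract each with the appropriate single-index consequence of the factorization ($i \ci_p Nd(K)\setminus K\setminus Pa(K) \,|\, Pa(K)$, respectively $|\, K\setminus\{i\}\cup Pa(K)$); your set identities, which do require $Fa(K)\cap Mo(K)=\emptyset$, make the contracted statements exactly B1 and B2. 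What the paper's approach buys is that the separations behind B1--B3 are verified graphically; what yours buys is independence from Theorem \ref{the:zg}, so Theorem \ref{the:zb} could be proved before the global property is available. Two small bookkeeping points: in the backward direction, ``contracting the two fragments'' is not quite the right axiom (the first fragment's conditioning set is $Fa(K)\cup Mo(i)$, not $Pa(K)$); you want either weak union applied directly to B1 or composition to reassemble B1 followed by weak union --- the conclusion is unaffected. And your fact (ii) for a \emph{set} $S$ of coefficients, as well as the set versions of fact (i), silently use the intersection property, which is fine for the regular Gaussians assumed throughout but worth flagging.
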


We say that a probability distribution $p$ satisfies the pairwise Markov property with respect to an UCG $G$ if for any non-adjacent vertices $i$ and $j$ of $G$ with $i \in K$ and $K \in Cc(G)$
\begin{itemize}
\item[(P1)] $i \ci_p j | Nd(i) \setminus K \setminus \{j\}$ if $j \in Nd(i) \setminus K \setminus Fa(K) \setminus Mo(i)$, and
\item[(P2)] $i \ci_p j | Nd(i) \setminus \{i,j\}$ if $j \in Nd(i) \setminus \{i\} \setminus Fa(i) \setminus Mo(K)$.

\end{itemize}

\begin{theorem}\label{the:pb}
The pairwise and block-recursive Markov properties are equivalent for graphoids.
\end{theorem}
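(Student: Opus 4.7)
The plan is to prove both implications using the graphoid axioms, together with two structural facts: (a) for every $i \in K$ one has $Nd(i) = Nd(K)$, since any semidirected route $i \ra \cdots \to B$ can be prepended with an undirected path inside $K$ to produce a semidirected route from any $j \in K$ to $B$, and the absence of semidirected cycles forces $K \subseteq Nd(i)$; and (b) the UCG constraint $Fa(K) \cap Mo(K) = \emptyset$, which makes the sets $A := Nd(K) \setminus K \setminus Fa(K) \setminus Mo(i)$ and $B := Fa(K) \cup Mo(i)$ disjoint with $A \cup B = Nd(K) \setminus K$, and yields analogous disjoint decompositions for the sets occurring in (B2) and (P2).

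For the direction block-recursive $\Rightarrow$ pairwise, (P1) follows from (B1) by decomposition (to pick $j \in A$) and weak union (to move $A \setminus \{j\}$ into the conditioning), yielding $i \ci_p j | Nd(i) \setminus K \setminus \{j\}$. The case of (P2) with $j \notin K$ is handled the same way starting from (B2). For (P2) with $j \in K \setminus \{i\} \setminus Ne(i)$, the conditioning set $Nd(i) \setminus \{i,j\}$ is the one in (B3) enlarged by $W_1 := Nd(K) \setminus K \setminus Pa(K)$. So I first obtain $i \ci_p W_1 | K \setminus \{i\} \cup Pa(K)$ by decomposition and weak union from (B2), crucially using $Fa(K) \cap Mo(K) = \emptyset$ to rewrite $K \setminus \{i\} \cup Fa(i) \cup Mo(K) \cup (Fa(K) \setminus Fa(i))$ as $K \setminus \{i\} \cup Pa(K)$; combining this with (B3) via contraction and then weak union yields (P2).

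For the converse pairwise $\Rightarrow$ block-recursive, I derive (B1) from the (P1) statements by induction on $|A|$. In the inductive step, fix any $j_0 \in A$; the inductive hypothesis applied to $A \setminus \{j_0\}$ with conditioning $B \cup \{j_0\}$ delivers $i \ci_p A \setminus \{j_0\} | B \cup \{j_0\}$, and together with the (P1) statement $i \ci_p j_0 | (A \setminus \{j_0\}) \cup B$ this gives $i \ci_p A | B$ via the intersection axiom. The same induction applied to the (P2) statements with $j \notin K$ yields (B2). Finally, (B3) comes from the now-established (B2) together with (P2) for $j \in K \setminus \{i\} \setminus Ne(i)$: decomposition and weak union on (B2) produce $i \ci_p W_1 | (K \setminus \{i,j\} \cup Pa(K)) \cup \{j\}$, and intersection with the (P2) statement $i \ci_p j | (K \setminus \{i,j\} \cup Pa(K)) \cup W_1$ followed by decomposition yields (B3).

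The main obstacle is the mismatch between (B3) and the pairwise statement obtained by specializing (P2) to $j \in K$, since the two conditioning sets differ by $W_1$: one cannot recover either statement from the other alone. The trick is to use (or first derive) (B2) to supply the auxiliary independence $i \ci_p W_1 | K \setminus \{i\} \cup Pa(K)$, and then bridge via contraction or intersection. The UCG constraint $Fa(K) \cap Mo(K) = \emptyset$ is used repeatedly to keep the relevant set decompositions disjoint; without it the needed set-algebra identities fail, consistent with the role played by the same constraint in Lemma \ref{lem:requirement}.
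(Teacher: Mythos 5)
Your proof is correct, and its overall architecture matches the paper's: both directions rest on the identity $Nd(i)=Nd(K)$, repeated application of intersection turns the pairwise statements into (B1) and (B2), and weak union recovers (P1) and the $j\notin K$ case of (P2) from (B1) and (B2). Where you genuinely diverge is in the two $j\in K$ cases, and your route is arguably cleaner. For (P2) with $j\in K$, the paper first invokes Lauritzen's Theorem 3.7 to upgrade (B3) to the global Markov property of $p(K|Pa(K))$ with respect to $G_K$, obtains $i \ci_p K \setminus \{i\} \setminus Ne(i) \,|\, Pa(K) \cup Ne(i)$, and only then contracts with the consequence of (B2); you instead contract the single-pair statement (B3) directly with $i \ci_p W_1 \,|\, K\setminus\{i\}\cup Pa(K)$ and apply weak union, which avoids the external theorem entirely. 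Symmetrically, for (B3) the paper runs repeated intersection over \emph{all} non-neighbors of $i$ in $Nd(K)\setminus\{i\}\setminus Pa(K)$ and then peels the result down by decomposition and weak union, whereas you use the already-derived (B2) to supply $i \ci_p W_1 \,|\, (K\setminus\{i,j\}\cup Pa(K))\cup\{j\}$ and apply a single intersection with the relevant (P2) instance. Both of your replacement steps check out against the axioms as stated (contraction with $Y=W_1$, $W=\{j\}$, $Z=K\setminus\{i,j\}\cup Pa(K)$ in the first case; intersection with the same instantiation in the second), and you correctly locate the one place where $Fa(K)\cap Mo(K)=\emptyset$ is indispensable, namely in verifying that $Fa(K)\setminus Fa(i)$ sits inside the independent set of (B2) before weak union can move it into the conditioning set. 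The only caveat is a slight overstatement: the disjointness of $A$ and $B$ in (B1) and the identity $A\cup B=Nd(K)\setminus K$ hold by construction and do not themselves require the UCG constraint.
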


We say that a probability distribution $p$ satisfies the local Markov property with respect to an UCG $G$ if for any vertex $i$ of $G$ with $i \in K$ and $K \in Cc(G)$
\begin{itemize}
\item[(L1)] $i \ci_p Nd(i) \setminus K \setminus Fa(K) \setminus Mo(i) | Fa(K) \cup Mo(i)$ and
\item[(L2)] $i \ci_p Nd(i) \setminus \{i\} \setminus Pa(i) \setminus Ne(i) \setminus Mo(Ne(i)) | Pa(i) \cup Ne(i) \cup Mo(Ne(i))$.
\end{itemize}

\begin{theorem}\label{the:lp}
The local and pairwise Markov properties are equivalent for Gaussian distributions.
\end{theorem}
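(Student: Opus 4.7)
My plan is to treat the two directions of the equivalence separately.

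The direction $(L)\Rightarrow(P)$ holds for any semigraphoid, hence for Gaussians. Given $j$ satisfying the hypothesis of (P1), I would apply weak union to (L1), absorbing $(Nd(i)\setminus K\setminus Fa(K)\setminus Mo(i))\setminus\{j\}$ from the left-hand side into the conditioning set; using $Fa(K),Mo(i)\subseteq Nd(K)\subseteq Nd(i)$ together with $Fa(K)\cap K=Mo(i)\cap K=\emptyset$, the resulting conditioning set collapses to $Nd(i)\setminus K\setminus\{j\}$, giving (P1). The derivation of (P2) from (L2) is analogous, the key inclusion now being $Pa(i)\cup Ne(i)\cup Mo(Ne(i))\subseteq Nd(i)\setminus\{i,j\}$: the three sets are non-descendants of $i$ (for $Mo(Ne(i))$ one uses a short semidirected-cycle argument against a neighbour in $K$), and the hypothesis $j\notin Fa(i)\cup Mo(K)$ together with non-adjacency of $i,j$ excludes $j$ from each, since $j\in Mo(Ne(i))$ would force $j\in Mo(K)$.

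For $(P)\Rightarrow(L)$, which uses Gaussianity essentially, I would not attempt a direct derivation but instead route through established equivalences. Theorem~\ref{the:pb} gives $(P)\Leftrightarrow(B)$ for any graphoid, and Theorems~\ref{the:zb} and~\ref{the:zg} together give that $(B)$ is equivalent to the global Markov property for Gaussian distributions. Chaining yields $(P)\Leftrightarrow\text{global}$ for Gaussians, so it suffices to derive (L) from the global property. This reduces to verifying the two graph-theoretic separations
\begin{equation*}
i\ci_G Nd(i)\setminus K\setminus Fa(K)\setminus Mo(i)\mid Fa(K)\cup Mo(i)
\end{equation*}
and
\begin{equation*}
i\ci_G Nd(i)\setminus\{i\}\setminus Pa(i)\setminus Ne(i)\setminus Mo(Ne(i))\mid Pa(i)\cup Ne(i)\cup Mo(Ne(i))
\end{equation*}
in $G$ using the UCG route criterion of Section~\ref{sec:global}.

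I would verify both separations by contradiction, assuming a $Z$-open route $\rho=i,v_1,\ldots,v_n=j$ and performing a case analysis on the first edge $(i,v_1)$. If $i-v_1$, then the first section of $\rho$ lies inside $K$, which is disjoint from $Z$, so the section must be a non-collider section; tracing $\rho$ forward, the route must eventually exit $K$ through an arrow, and the induced collider section or collider node either lacks a member of $Z$ (blocking $\rho$) or forces a subsequent non-collider segment through $Z$ in violation of condition (iii). If the first edge is $v_1\ra i$ or $v_1\dra i$, then $v_1\in Pa(i)$ lies in $Z$ for both separations, and a similar forward trace shows that any prolongation of $\rho$ either meets a blocking configuration or terminates in $Z$, contradicting $j\notin Z$. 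The main obstacle is the combinatorial complexity of this route analysis: three edge types in a UCG interact through the dual openness conditions on collider sections and collider nodes, and the second separation is especially subtle because its conditioning set excludes the non-neighbours of $i$ inside $K$, so sections of $\rho$ passing through $K\setminus Ne(i)\setminus\{i\}$ require careful bookkeeping of how $\rho$ enters and leaves the chain component.
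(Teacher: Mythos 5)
Your proposal follows essentially the same route as the paper: the direction $(L)\Rightarrow(P)$ is obtained by weak union, and the converse is obtained by chaining Theorems \ref{the:pb}, \ref{the:zb} and \ref{the:zg} to get from the pairwise to the global Markov property and then reading off L1 and L2 from the two graphical separations. The paper simply asserts those separations as ``easy to verify'' rather than carrying out the route-based case analysis you sketch, so your extra bookkeeping is elaboration rather than a divergence in method.
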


Theorems \ref{the:zg} and \ref{the:zb}-\ref{the:lp} imply the following corollary, which summarizes our results.

\begin{corollary}
The global, block-recursive, local and pairwise Markov properties are all equivalent for Gaussian distributions.
\end{corollary}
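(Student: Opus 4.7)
The plan is to assemble the corollary by chaining the four preceding theorems into a single cycle of equivalences for Gaussian distributions. Beyond those theorems, the only extra ingredient I need is the fact, recorded in the Preliminaries via \citet{Studeny2005}, that every Gaussian distribution is a (compositional) graphoid.

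First I would combine Theorems \ref{the:zg} and \ref{the:zb}. Both characterize the zero-restriction conditions (Equations \ref{eq:recursion1} and \ref{eq:zerom}--\ref{eq:zeron}) for a Gaussian distribution $p$ with respect to an UCG $G$: Theorem \ref{the:zg} in terms of the global Markov property, and Theorem \ref{the:zb} in terms of the block-recursive Markov property. Eliminating the zero restrictions as a common middle term immediately yields global $\Leftrightarrow$ block-recursive for Gaussian $p$.

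Next, Theorem \ref{the:pb} supplies pairwise $\Leftrightarrow$ block-recursive whenever $p$ is a graphoid; since every Gaussian is a graphoid, this specialises to pairwise $\Leftrightarrow$ block-recursive for Gaussian $p$. Finally, Theorem \ref{the:lp} directly gives local $\Leftrightarrow$ pairwise for Gaussian $p$. Chaining these three equivalences in the order
\[
\text{global} \;\Leftrightarrow\; \text{block-recursive} \;\Leftrightarrow\; \text{pairwise} \;\Leftrightarrow\; \text{local}
\]
establishes the corollary.

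There is no serious obstacle: the proof is essentially a bookkeeping exercise of citing the four preceding theorems and invoking the Gaussian-graphoid fact exactly once, to bridge Theorem \ref{the:pb} into the Gaussian setting. The only minor point worth verifying is that Theorem \ref{the:pb}, as stated, applies to the pairwise and block-recursive properties exactly as defined above for UCGs (rather than being implicitly restricted to LWF or AMP CGs); given the generality of the definitions, this is immediate.
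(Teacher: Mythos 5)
Your proof is correct and matches the paper's intent exactly: the corollary is obtained by chaining Theorems \ref{the:zg} and \ref{the:zb}--\ref{the:lp} through the common middle terms (the zero restrictions and the block-recursive property), with the Gaussian-graphoid fact bridging Theorem \ref{the:pb} into the Gaussian setting. Nothing further is needed.
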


\section{Maximum Likelihood Parameter Estimation}\label{sec:MLEs}

In this section, we introduce a procedure for computing maximum likelihood estimates (MLEs) of the parameters of an UCG $G$, i.e. the MLEs of the non-zero entries of $(\beta_K)_{K,Mo(K)}$, $\Omega_{K,K}$ and $\Omega_{K,Fa(K)}$ for every chain component $K$ of $G$. Specifically, we adapt the procedure proposed by \citet{DrtonandEichler2006} for computing MLEs of the parameters of an AMP CG. The procedure combines generalized least squares and iterative proportional fitting. Suppose that we have access to a data matrix $D$ whose column vectors are the data instances and the rows are indexed by $V$. Moreover, let $D_X$ denote the data over the variables $X \subseteq V$, i.e. the rows of $D$ corresponding to $X$. Note that thanks to Equation \ref{eq:recursion1}, the MLEs of the parameters corresponding to each chain component $K$ of $G$ can be obtained separately. Moreover, recall that $K | Pa(K) \sim \mathcal{N}(\beta_K Pa(K), \Omega_{K,K}^{-1})$. Therefore, we may compute the MLEs of the parameters corresponding to the component $K$ by iterating between computing the MLEs $\hat{\Omega}_{K,K}$ and $\hat{\Omega}_{K,Fa(K)}$ and thus $(\hat{\beta}_K)_{K,Fa(K)}$ while fixing $(\hat{\beta}_K)_{K,Mo(K)}$, and computing the MLEs $(\hat{\beta}_K)_{K,Mo(K)}$ while fixing $(\hat{\beta}_K)_{K,Fa(K)}$. Specifically, we initialize $(\hat{\beta}_K)_{K,Mo(K)}$ to zero and then iterate through the following steps until convergence:
\begin{itemize}
\item Compute $\hat{\Omega}_{K,K}$ and $\hat{\Omega}_{K,Fa(K)}$ from the data $D_{Fa(K)}$ and the residuals $D_K - (\hat{\beta}_K)_{K,Mo(K)} D_{Mo(K)}$. 
\item Compute $(\hat{\beta}_K)_{K,Fa(K)}$ from $\hat{\Omega}_{K,K}$ and $\hat{\Omega}_{K,Fa(K)}$.
\item Compute $(\hat{\beta}_K)_{K,Mo(K)}$ from the data $D_{Mo(K)}$ and the residuals $D_K - (\hat{\beta}_K)_{K,Fa(K)} D_{Fa(K)}$.
\end{itemize}
The first step corresponds to computing the MLEs of the parameters of a LWF CG and, thus, it is solved by running the iterative proportional fitting procedure as indicated in \citet{Lauritzen1996}. This procedure optimizes the likelihood function iteratively over different sections of the parameter space. Specifically, each iteration adjusts the covariance matrix for one clique marginal. The second step above is solved by Equation \ref{eq:beta}. The third step corresponds to computing the MLEs of the parameters of an AMP CG and, thus, it is solved analytically by Equation 13 in \citet{DrtonandEichler2006}. Note that to guarantee convergence to the MLEs, the first and third step should be solved jointly. Therefore, our procedure is expected to converge to a local rather than global maximum of the likelihood function. As \citet{DrtonandEichler2006} note, this is also the case for their procedure, upon which ours builds. As for convergence, note that our procedure consists in interleaving the iterative proportional fitting procedure in the first step, and the analytical solution to generalized least squares in the third step. \citet[Proposition 1]{DrtonandEichler2006} prove that each of these steps increases the likelihood function, which implies convergence since the likelihood function is bounded. See also \citet[Theorem 5.4]{Lauritzen1996}. The experiments in the next section confirm that our procedure converges to satisfactory estimates within a few iterations.

\subsection{Experimental Evaluation}

First, we generate 1000 UCGs as follows. Each UCG consists of five mothers, five fathers and 10 children. The edges are sampled independently and with probability 0.2 each. If the edges sampled do not satisfy the following two constraints, the sampling process is repeated: (i) There must be an edge from every parent to some child, i.e. the parents are real parents, and (ii) the children must be connected by an undirected path, i.e. they conform a chain component.

Then, we parameterize each of the 1000 UCGs generated above as follows. The 10 children are denoted by $K$, and the 10 parents by $Pa(K)$. The non-zero elements of $\beta_K$ corresponding to the mothers are sampled uniformly from the interval [-3,3]. The non-zero elements of $\Omega_{K,K}$ and $\Omega_{K,Fa(K)}$ are sampled uniformly from the interval [-3,3], with the exception of the diagonal elements of $\Omega_{K,K}$ which are sampled uniformly from the interval [0,30]. If the sampled $\Omega_{K,K}$ is not positive definite then the sampling process is repeated. The reason why the diagonal elements are sampled from a wider interval is to avoid having to repeat the sampling process too many times.

Finally, note that each of the 1000 parameterized UCGs generated above specifies one probability distribution $p(K|Pa(K))$. The goal of our experiments is to evaluate the accuracy of the algorithm presented before to estimate the parameter values corresponding to $p(K|Pa(K))$ from a finite sample of $p(K,Pa(K))$. To generate these learning data, we sample first $p(Pa(K))$ and then $p(K|Pa(K))$. Each of the 1000 probability distributions $p(Pa(K))$ is constructed as follows. The off-diagonal elements of $\Omega_{Pa(K),Pa(K)}$ are sampled uniformly from the interval [-3,3], whereas the diagonal elements are sampled uniformly from the interval [0,30]. As before, we repeat the sampling process if the resulting matrix is not positive definite. Note that no element of $\Omega_{Pa(K),Pa(K)}$ is identically zero. For the experiments, we consider samples of size 500, 2500 and 5000.

For each of the UCGs and corresponding samples generated above, we run the parameter estimation procedure until the MLEs do not change in two consecutive iterations or 100 iterations are performed. We then compute the relative difference between each true parameter value $\theta$ and the MLE $\hat{\theta}$, which we define as $abs((\theta - \hat{\theta})/\theta)$. We also compute the residual difference, which we define as the residual with the parameter estimates minus the residual with the true parameter values. The procedure is implemented in \texttt{R} and the code is available at \texttt{https://www.dropbox.com/s/b9vmqgf99da3qxm/UCGs.R?dl=0}.

\begin{table}[t]
\begin{center}\caption{Results of the experiments with edge probability 0.2.}\label{tab:res02}
\scalebox{0.8}{
\begin{tabular}{|l|l|r|r|r|r|r|r|}
\hline
Sample size & Performance criterion & Min & Q1 & Median & Mean & Q3 & Max\\
\hline
& Number of edges $\ra$ & 5.00 & 9.00 & 11.00 & 11.12 & 13.00 & 19.00\\  
& Number of edges $\dra$ & 5.00 & 9.00 & 11.00 & 11.19 & 13.00 & 20.00\\
& Number of edges $-$ & 9.00 & 10.00 & 12.00 & 11.80 & 13.00 & 19.00\\ 
\hline 
500 & Number of iterations & 4.00 & 8.00 & 10.00 & 17.04 & 17.00 & 101.00\\ 
& $\Omega_{K,K}$ relative diff. & 0.00 & 0.05 & 0.13 & 0.93 & 0.41 & 912.64\\
& $\Omega_{K,Fa(K)}$ relative diff. & 0.00 & 0.11 & 0.27 & 2.34 & 0.65 & 8219.97\\ 
& $(\beta_K)_{K,Fa(K)}$ relative diff. & 0.00 & 0.20 & 0.50 & 16.40 & 1.20 & 506006.30\\ 
& $(\beta_K)_{K,Mo(K)}$ relative diff. & 0.00 & 0.01 & 0.02 & 0.12 & 0.05 & 131.21\\
& Residual diff. & -1743.25 & -5.07 & -2.70 & -10.80 & -1.57 & 43.04\\ 
\hline
2500 & Number of iterations & 5.00 & 8.00 & 10.00 & 16.75 & 16.00 & 101.00\\ 
& $\Omega_{K,K}$ relative diff. & 0.00 & 0.02 & 0.06 & 0.64 & 0.18 & 4202.19\\
& $\Omega_{K,Fa(K)}$ relative diff. & 0.00 & 0.05 & 0.12 & 1.02 & 0.28 & 2662.50\\ 
& $(\beta_K)_{K,Fa(K)}$ relative diff. & 0.00 & 0.08 & 0.21 & 2.04 & 0.56 & 21299.60\\ 
& $(\beta_K)_{K,Mo(K)}$ relative diff. & 0.00 & 0.00 & 0.01 & 0.06 & 0.02 & 107.23\\
& Residual diff. & -3137.71 & -4.95 & -2.61 & -13.56 & -1.64 & 104.26\\ 
\hline
5000 & Number of iterations & 4.00 & 8.00 & 10.00 & 16.68 & 16.00 & 101.00\\ 
& $\Omega_{K,K}$ relative diff. & 0.00 & 0.02 & 0.04 & 0.32 & 0.13 & 1088.67\\ 
& $\Omega_{K,Fa(K)}$ relative diff. & 0.00 & 0.03 & 0.08 & 0.70 & 0.20 & 1674.42\\ 
& $(\beta_K)_{K,Fa(K)}$ relative diff. & 0.00 & 0.05 & 0.15 & 2.10 & 0.40 & 26507.95\\ 
& $(\beta_K)_{K,Mo(K)}$ relative diff. & 0.00 & 0.00 & 0.01 & 0.04 & 0.02 & 19.88\\ 
& Residual diff. & -1131.10 & -5.11 & -2.68 & -10.13 & -1.59 & 131.23\\ 
\hline
\end{tabular}}
\end{center}
\end{table}

The results of the experiments are reported in Table \ref{tab:res02}. The difference between the quartiles Q1 and Q3 is not too big, which suggests that the column Median is a reliable summary of most of the runs. We therefore focus on this column for the rest of the analysis. Note however that some runs are exceptionally good and some others exceptionally bad, as indicated by the columns Min and Max. To avoid a bad run, one may consider a more sophisticated initialization of the parameter estimation procedure, e.g. multiple restarts.

The sample size has a clear impact on the accuracy of the MLEs, as indicated by the decrease in relative difference. For instance, half of the MLEs are less than 27 \%, 12 \% and 8 \% away from the true values for the samples sizes 500, 2500 and 5000, respectively. The effect of the sample size on the accuracy of the MLEs can also be appreciated from the fact that the residual difference does not grow with the sample size. The parameters $(\beta_K)_{K,Mo(K)}$ seem easier to estimate than $(\beta_K)_{K,Fa(K)}$, as indicated by the smaller relative difference of the former. This is not surprising since the latter may accumulate the errors in the estimation of $\Omega_{K,K}$ and $\Omega_{K,Fa(K)}$ (recall Equation \ref{eq:beta}).

Next, we repeat the experiments with an edge probability of 0.5 instead of 0.2, in order to consider denser UCGs. The results of these experiments are reported in Table \ref{tab:res05}. They lead to essentially the same conclusions as before. When comparing the two tables, we can see that the MLE accuracy is slightly worse for the denser UCGs. This is expected because the denser UCGs have more parameters to estimate from the same amount of data. However, the residual difference is better for the denser UCGs. This is again expected because the denser UCGs impose fewer constraints. All in all, both tables show that the quartile Q3 of the residual difference is negative, which indicates that the MLEs induce a better fit of the data than the true parameter values. We therefore conclude that the parameter estimation procedure proposed works satisfactorily.

\begin{table}[t]
\begin{center}\caption{Results of the experiments with edge probability 0.5.}\label{tab:res05}
\scalebox{0.8}{
\begin{tabular}{|l|l|r|r|r|r|r|r|}
\hline
Sample size & Performance criterion & Min & Q1 & Median & Mean & Q3 & Max\\
\hline
& Number of edges $\ra$ & 14.00 & 22.00 & 25.00 & 24.89 & 27.00 & 36.00\\  
& Number of edges $\dra$ & 13.00 & 23.00 & 25.00 & 25.15 & 28.00 & 36.00\\
& Number of edges $-$ & 13.00 & 20.00 & 22.00 & 22.24 & 24.00 & 32.00\\ 
\hline 
500 & Number of iterations & 7.00 & 11.00 & 16.00 & 26.21 & 28.00 & 101.00\\ 
& $\Omega_{K,K}$ relative diff. & 0.00 & 0.07 & 0.19 & 1.68 & 0.53 & 3293.27\\
& $\Omega_{K,Fa(K)}$ relative diff. & 0.00 & 0.12 & 0.31 & 2.87 & 0.73 & 18323.37\\ 
& $(\beta_K)_{K,Fa(K)}$ relative diff. & 0.00 & 0.10 & 0.28 & 1.80 & 0.73 & 2701.20\\ 
& $(\beta_K)_{K,Mo(K)}$ relative diff. & 0.00 & 0.01 & 0.02 & 0.18 & 0.06 & 1298.98\\
& Residual diff. & -2570.64 & -11.48 & -6.30 & -16.99 & -4.15 & 577.87\\ 
\hline
2500 & Number of iterations & 6.00 & 11.00 & 15.00 & 25.75 & 28.00 & 101.00\\ 
& $\Omega_{K,K}$ relative diff. & 0.00 & 0.03 & 0.09 & 0.74 & 0.23 & 1880.50\\
& $\Omega_{K,Fa(K)}$ relative diff. & 0.00 & 0.05 & 0.13 & 0.98 & 0.32 & 3092.89\\ 
& $(\beta_K)_{K,Fa(K)}$ relative diff. & 0.00 & 0.05 & 0.12 & 0.81 & 0.33 & 1138.60\\ 
& $(\beta_K)_{K,Mo(K)}$ relative diff. & 0.00 & 0.00 & 0.01 & 0.06 & 0.02 & 65.53\\
& Residual diff. & -1409.83 & -10.80 & -6.09 & -7.53 & -4.16 & 3157.38\\ 
\hline
5000 & Number of iterations & 6.00 & 11.00 & 15.00 & 25.66 & 27.25 & 101.00\\ 
& $\Omega_{K,K}$ relative diff. & 0.00 & 0.02 & 0.06 & 0.50 & 0.17 & 1236.54\\ 
& $\Omega_{K,Fa(K)}$ relative diff. & 0.00 & 0.04 & 0.10 & 0.94 & 0.23 & 4179.08\\ 
& $(\beta_K)_{K,Fa(K)}$ relative diff. & 0.00 & 0.03 & 0.09 & 0.57 & 0.24 & 969.71 \\ 
& $(\beta_K)_{K,Mo(K)}$ relative diff. & 0.00 & 0.00 & 0.01 & 0.07 & 0.02 & 644.40\\ 
& Residual diff. & -2423.18 & -11.15 & -6.35 & -2.24 & -4.22 & 5801.54\\ 
\hline
\end{tabular}}
\end{center}
\end{table}

Finally, we conduct a sanity check aimed to evaluate the behavior of the parameter estimation procedure proposed when the UCG contains spurious or superfluous edges, i.e. edges whose associated parameters are zero and thus may be removed from the UCG. To this end, we repeat the previous experiments with a slight modification. Like before, the elements of $(\beta_K)_{K,Mo(K)}$, $\Omega_{K,Fa(K)}$ and $\Omega_{K,K}$ associated to the edges in the UCG are sampled uniformly from the interval [-3,3]. However, we now set 25 \% of these parameters to zero. We expect the estimates for these zeroed parameters to get closer to zero as the sample size grows. The results of these experiments with an edge probability of 0.5 are reported in Table \ref{tab:res05zeroed}. The results for an edge probability of 0.2 are similar. The first three rows of the table report the number of edges. Each edge has associated one parameter, and 25 \% of these parameters have been zeroed in the experiments. In the next rows, the table reports the absolute difference between the zeroed parameters and their estimates, i.e. the absolute values of the estimates. As expected, the larger the sample size is the closer to zero the MLEs of the zeroed parameters become. For instance, 75 \% of MLEs of the zeroed parameters take a value smaller than 0.76, 0.33 and 0.24 for the samples sizes 500, 2500 and 5000, respectively. Note that these numbers are much lower for the zeroed $(\beta_K)_{K,Mo(K)}$ parameters (specifically 0.06, 0.03, 0.02) because, recall, our parameter estimation procedure initializes $(\hat{\beta}_K)_{K,Mo(K)}$ to zero. Table \ref{tab:res05zeroed} also shows the residual difference, which is comparable to that in Table \ref{tab:res05}, which suggests that the existence of spurious edges does not hinder fitting the data. We therefore conclude that the parameter estimation procedure proposed behaves as it should in this sanity check.

\begin{table}[t]
\begin{center}\caption{Results for the zeroed parameters with edge probability 0.5.}\label{tab:res05zeroed}
\scalebox{0.8}{
\begin{tabular}{|l|l|r|r|r|r|r|r|}
\hline
Sample size & Performance criterion & Min & Q1 & Median & Mean & Q3 & Max\\
\hline
& Number of edges $\ra$ & 14.00 & 22.00 & 25.00 & 24.89 & 27.00 & 36.00\\  
& Number of edges $\dra$ & 13.00 & 23.00 & 25.00 & 25.15 & 28.00 & 36.00\\
& Number of edges $-$ & 13.00 & 20.00 & 22.00 & 22.32 & 25.00 & 33.00\\ 
\hline 
500 & Number of iterations & 7.00 & 11.00 & 16.00 & 25.84 & 28.00 & 101.00\\ 
& $\Omega_{K,K}$ absolute diff. & 0.00 & 0.15 & 0.35 & 0.50 & 0.70 & 5.71\\
& $\Omega_{K,Fa(K)}$ absolute diff. & 0.00 & 0.17 & 0.39 & 0.54 & 0.76 & 10.11\\ 
& $(\beta_K)_{K,Mo(K)}$ absolute diff. & 0.00 & 0.01 & 0.03 & 0.05 & 0.06 & 1.21\\
& Residual diff. & -3664.56 & -11.18 & -6.13 & -19.06 & -4.15 & 691.96\\ 
\hline
2500 & Number of iterations & 6.00 & 11.00 & 15.00 & 25.30 & 27.00 & 101.00\\ 
& $\Omega_{K,K}$ absolute diff. & 0.00 & 0.06 & 0.15 & 0.21 & 0.29 & 6.36\\
& $\Omega_{K,Fa(K)}$ absolute diff. & 0.00 & 0.07 & 0.17 & 0.24 & 0.33 & 7.37\\ 
& $(\beta_K)_{K,Mo(K)}$ absolute diff. & 0.00 & 0.01 & 0.01 & 0.02 & 0.03 & 0.53\\
& Residual diff. & -1859.05 & -11.04 & -6.13 & -13.03 & -4.09 & 3222.82\\ 
\hline
5000 & Number of iterations & 6.00 & 11.00 & 15.00 & 25.66 & 27.25 & 101.00\\ 
& $\Omega_{K,K}$ absolute diff. & 0.00 & 0.04 & 0.11 & 0.16 & 0.22 & 5.62\\ 
& $\Omega_{K,Fa(K)}$ absolute diff. & 0.00 & 0.06 & 0.12 & 0.17 & 0.24 & 8.21\\  
& $(\beta_K)_{K,Mo(K)}$ absolute diff. & 0.00 & 0.00 & 0.01 & 0.01 & 0.02 & 0.56\\ 
& Residual diff. & -1256.74 & -10.93 & -6.18 & -6.18 & -4.22 & 6708.29\\ 
\hline
\end{tabular}}
\end{center}
\end{table}

\section{Causal Inference}\label{sec:docalculus}

In this section, we show how to compute the effects of interventions in UCGs. Intervening on a set of variables $X \subseteq V$ modifies the natural causal mechanism of $X$, as opposed to (passively) observing $X$. For simplicity, we only consider interventions that set $X$ to constant values. We represent an intervention that sets $X=x$ as $do(X=x)$. Given a chain component $K$ of an UCG $G$, we have that
\[
K | Pa(K) \sim \mathcal{N}(\beta_K Pa(K), \Lambda_K)
\]
as discussed at length in Section \ref{sec:ucgs}, or equivalently
\[
K = \beta_K Pa(K) + \epsilon_K
\]
where $\epsilon_K \sim \mathcal{N}(0, \Lambda_K)$. We interpret the last equation as a structural equation, i.e. it defines the natural causal mechanism of the variables in $K$. Specifically, the natural causal mechanism of $V_i \in K$ is given by the following structural equation:
\[
V_i = \sum_{V_j \in Pa(K)} \beta_{i,j} V_j + \epsilon_i
\]
where $\epsilon_i$ denotes an error variable representing the unmodelled causes of $V_i$. Then, the intervention $do(V_i=a)$ amounts to replacing the last equation with the structural equation $V_i=a$, which we dub the interventional causal mechanism of $V_i$. The resulting system of equations represents the behavior of the phenomenon being modelled under the intervention $do(V_i=a)$, i.e. $p(V \setminus \{V_i\} | do(V_i=a))$. Moreover, $p(V \setminus \{V_i\} | do(V_i=a))$ satisfies the global Markov property with respect to $G_{V \setminus \{V_i\}}$. To see it, note that every $\{V_i\}$-open route in $G$ that does not include $V_i$ is a $\{V_i\}$-open route in $G_{V \setminus \{V_i\}}$ too. However, every $\{V_i\}$-open route in $G$ that includes $V_i$ must contain a subroute of the form $A \dra V_i \dla B$ or $A \dra V_i - B$ or $A \dra V_i \la B$ or $A \ra V_1 - \cdots - V_i - \cdots - V_n \la B$. Note that all of these subroutes are part of the natural causal mechanism of $V_i$ which has been replaced and, thus, they are inactive, i.e. they are not really $\{V_i\}$-open.\footnote{Note that we cannot simply remove from $G$ the edges in these subroutes, because some are part of the natural causal mechanisms of other variables in $K$.}

The single variable interventions described in the paragraph above can be generalized to sets by simply replacing the corresponding equation for each variable in the set. 

Graphically, we can represent the natural and interventional causal mechanisms of the variables in an UCG $G$ by adding a new parent to each variable. We denote the resulting UCG by $G'$. The new parent of a variable $V_i$ is a variable $F_{V_i}$ that has the same domain as $V_i$ plus a state labeled {\it idle}: $F_{V_i} = a$ represents the intervention $do(V_i=a)$, whereas $F_{V_i} = {\it idle}$ represents no intervention on $V_i$. In other words, $F_{V_i} = a$ represents that the natural causal mechanism is inactive and the interventional causal mechanism is active, whereas $F_{V_i} = {\it idle}$ represents the opposite. See \citet[Section 3.2.2]{Pearl2009} for further details. In order to decide whether to augment $G$ with $F_{V_i} \ra V_i$ or $F_{V_i} \dra V_i$, we have to study the interventional causal mechanism. This is unlike previous works where the value set by the intervention is all that matters. Specifically, if $F_{V_i} = {\it idle}$ then the interventional causal mechanism is inactive and, thus, it does not matter whether we add $F_{V_i} \ra V_i$ or $F_{V_i} \dra V_i$. We may even decide not to add $F_{V_i}$ at all. On the other hand, if $F_{V_i} = a$ then the interventional causal mechanism is active and, thus, the type of arrow added matters: If the interventional causal mechanism is such that the intervention delivered may affect (i.e., interfere with) the rest of the variables in $K$, then we augment $G$ with $F_{V_i} \ra V_i$, otherwise we augment it with $F_{V_i} \dra V_i$. We illustrate this with the mother-child example from Section \ref{sec:motivation}. An intervention that immunizes the mother against the disease (i.e., $do(D_1=0)$) may or may not protect the child (i.e., interfere with $D_2$), depending on how the intervention is delivered: It protects the child when the interventional causal mechanism is the intake of some medication (different from the vaccination $V_1$ but comparable), but it does not protect the child when the interventional causal mechanism is a gene therapy (different from the healthy carrier genotype $G_1$ but comparable). Then, the former case should be modeled as $F_{D_1} \ra D_1$, whereas the latter should be modelled as $F_{D_1} \dra D_1$.

As mentioned, our goal is to compute or identify a causal effect $p(Y|do(X=x))$ with the help of a given UCG. That is, we would like to leverage the UCG to transform the causal effect of interest into an expression that contains neither the $do$ operator nor latent variables, so that it can be estimated from observational data. Pearl's {\em do}-calculus consists of three rules whose repeated application together with standard probability manipulations do the job for acyclic directed mixed graphs \citep[Section 3.4]{Pearl2009}. We show next that the calculus carries over into UCGs. Specifically, the calculus consists of the following three rules:
\begin{itemize}
\item Rule 1 (insertion/deletion of observations):
\[
p(Y| do(X), Z \cup W) = p(Y| do(X), W) \text{ if } Y \ci_{(G_{V \setminus X})'} Z | W.
\]

\item Rule 2 (intervention/observation exchange): 
\[
p(Y| do(X \cup Z), W) = p(Y| do(X), Z \cup W) \text{ if } Y \ci_{(G_{V \setminus X})'} F_Z | W \cup Z.
\]

\item Rule 3 (insertion/deletion of interventions): 
\[
p(Y| do(X \cup Z), W) = p(Y| do(X), W) \text{ if } Y \ci_{(G_{V \setminus X})'} F_Z | W.
\]
\end{itemize}

\begin{theorem}\label{the:soundrules}
Rules 1-3 are sound for UCGs.
\end{theorem}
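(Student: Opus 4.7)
The plan is to adapt Pearl's proof of the \emph{do}-calculus (\citet[Section 3.4]{Pearl2009}) to UCGs, leveraging the augmented graph construction introduced just before the theorem. First, I would formalize the joint behavior $\tilde{p}$ over $V \cup F_V$ encoded by $G'$: each intervention variable $F_{V_i}$ is a root with a single child $V_i$ (joined by $\ra$ or $\dra$ depending on the interventional mechanism), and the post-intervention distribution is recovered as $p(V \setminus X \mid do(X = x)) = \tilde{p}(V \setminus X \mid F_X = x,\, F_{V \setminus X} = \mathit{idle})$.

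The central technical step is to verify that $\tilde{p}$, after conditioning on $F_X = x$, satisfies the global Markov property with respect to $(G_{V \setminus X})'$. The argument sketched in the paragraphs preceding the theorem already shows that $p(V \setminus X \mid do(X=x))$ satisfies the global Markov property with respect to $G_{V \setminus X}$; I would extend this to $(G_{V \setminus X})'$ by noting that each appended $F_{V_i}$ is a root with a single child, so it can only appear as an endpoint of a route, and a route ending at $F_{V_i}$ is $Z$-open exactly when the corresponding route ending at $V_i$ in $G_{V \setminus X}$ is $Z$-open, together with the usual non-collider requirement on $F_{V_i}$.

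With this lemma in hand, the three rules follow by standard conditional-probability manipulations. For Rule 1, the separation involves only variables in $V \setminus X$, so it reduces to a separation in $G_{V \setminus X}$ and the desired equality is immediate from the post-intervention global Markov property. For Rules 2 and 3, the separation $Y \ci_{(G_{V \setminus X})'} F_Z \mid \cdot$ yields $\tilde{p}(Y \mid F_X = x, F_Z, \cdot) = \tilde{p}(Y \mid F_X = x, \cdot)$; instantiating $F_Z$ at $z$ (which deterministically forces $Z = z$) or at $\mathit{idle}$ then translates this uniformity into the desired equality between $p(Y \mid do(X \cup Z), \cdot)$ and either $p(Y \mid do(X), Z \cup W)$ (Rule 2) or $p(Y \mid do(X), W)$ (Rule 3).

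The main obstacle is that $\tilde{p}$ is not strictly Gaussian, since each $F_{V_i}$ mixes a discrete \emph{idle} state with continuous interventional values, so Theorem \ref{the:zg} does not apply directly to $\tilde{p}$. To circumvent this, I would work at the level of separations rather than specific distributions, treating the intervention variables as switches that select among Gaussian conditional distributions on $V$: the separation criterion for UCGs is purely graphical, and what must be verified is that each relevant separation involving an $F$ node translates into an equality between the Gaussian conditionals indexed by the different $F$ assignments. Together with the observation that $F$ nodes can only appear as route endpoints, this keeps the analysis entirely within reach of Theorem \ref{the:zg} and the interventional semantics already laid out.
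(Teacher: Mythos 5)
Your overall strategy---reduce the three rules to conditional independences in a joint distribution $\tilde{p}$ over $V \cup F_V$ that satisfies the global Markov property with respect to the augmented graph---is Pearl's, and it is genuinely different from what the paper does. The paper never constructs or reasons about such a $\tilde{p}$; instead it argues semantically, route by route: the antecedent of each rule constrains the possible shapes of the $W$-open routes between $Z$ and $Y$ in $(G_{V \setminus X})'$, and for each remaining shape one argues directly that observing $Z=z$ and setting $Z=z$ have the same effect on $Y$ (Rule 2) or that setting $Z=z$ has no effect on $Y$ (Rule 3). Unfortunately, your route has a genuine gap at its central step. You claim that a route ending at $F_{V_i}$ is $Z$-open exactly when the corresponding truncated route ending at $V_i$ is $Z$-open. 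This is false in UCGs, and the failure is not incidental---it is the whole point of the construction. Whether the graph is augmented with $F_Z \ra Z$ or with $F_Z \dra Z$ changes the status of $Z$ on routes leaving $Z$ through an undirected edge: in $F_Z \ra Z - A \la B$ the subroute $Z - A$ becomes a collider section, whereas in $F_Z \dra Z - A$ the node $Z$ becomes a collider node. Consequently the separations $Y \ci_{(G_{V \setminus X})'} F_Z | \cdot$ in Rules 2 and 3 have different content depending on the edge type, and the paper's proof splits into exactly these two cases. Your sketch erases this distinction, so the lemma you need cannot be proved as stated.

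There is a second, independent gap. Even granting a correct graphical lemma, the passage from ``$F_Z$ is separated from $Y$'' to ``the Gaussian conditionals indexed by the different values of $F_Z$ coincide'' is not something you can get from Theorem \ref{the:zg} or from purely graphical reasoning: it requires the interventional semantics of the two edge types. The paper has to invoke this explicitly in the two delicate subcases---for Rule 2, a $W$-open route starting $Z - \cdots - A \dla B$ is handled by noting that $A \in W$ and that an $F_Z \ra Z$ intervention \emph{interferes} with $A$; for Rule 3, a route starting $Z - A$ is handled by noting that an $F_Z \dra Z$ intervention does \emph{not} interfere with $A$. Your proposal defers all of this to ``standard conditional-probability manipulations'' on $\tilde{p}$, and your acknowledged workaround for the non-Gaussianity of $\tilde{p}$ (treating the $F$ nodes as switches and verifying equalities between the selected Gaussian conditionals) is precisely the statement of Rules 2 and 3 rather than a proof of them. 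To repair the argument you would either need to prove a bona fide global Markov property for the mixed distribution $\tilde{p}$ over $(G_{V\setminus X})'$ (which the paper avoids, presumably because it is not straightforward), or fall back on the paper's direct case analysis of route shapes together with the interference/non-interference semantics of $\ra$ versus $\dra$.
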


\begin{corollary}\label{cor:lwf}
Consider an intervention on a set of variables $X$ in an UCG $G$. If the interventional causal mechanism of each variable $V_i \in X$ implies interference (i.e., it is modelled by augmenting $G$ with the edge $F_{V_i} \ra V_i$), then
\[
p(V \setminus X | do(X))= \prod_{K \in Cc(G)} p(K \setminus X |Pa(K) \cup (K \cap X)).
\]
\end{corollary}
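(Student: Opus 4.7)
The plan is to reduce the corollary to the chain-component factorization (Equation \ref{eq:recursion1}) applied to the UCG augmented with intervention indicators, and then exploit the solid-arrow hypothesis to collapse each interventional mechanism into ordinary conditioning inside its chain component.

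First I would form the augmented UCG $G'$ by adding, for each $V_i \in X$, a root node $F_{V_i}$ together with the solid edge $F_{V_i} \ra V_i$; the hypothesis places every $V_i \in X$ in the interference-type case, so this is precisely the augmentation prescribed in Section \ref{sec:docalculus}. The chain components of $G'$ are the chain components of $G$ together with the singletons $\{F_{V_i}\}$, and for every $K \in Cc(G)$ one has $Pa_{G'}(K) = Pa(K) \cup F_{K \cap X}$ where $F_{K \cap X} = \{F_{V_i} : V_i \in K \cap X\}$. Applying Equation \ref{eq:recursion1} to the joint $p(V, F_X)$ with respect to $G'$ gives
\[
p(V, F_X) = \prod_{V_i \in X} p(F_{V_i}) \cdot \prod_{K \in Cc(G)} p(K | Pa(K), F_{K \cap X}),
\]
and conditioning on the event $F_X = x$, which by definition encodes $do(X=x)$, removes the marginal factor and yields
\[
p(V | do(X=x)) = \prod_{K \in Cc(G)} p(K | Pa(K), F_{K \cap X} = x_{K \cap X}).
\]

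The crux of the argument, and the step I expect to be the main obstacle, is the per-component identity
\[
p(K \setminus X | Pa(K), F_{K \cap X} = x_{K \cap X}) = p(K \setminus X | Pa(K), V_{K \cap X} = x_{K \cap X}),
\]
i.e.\ that intervening on $V_{K \cap X}$ through solid arrows is indistinguishable from conditioning on $V_{K \cap X}$ inside the joint $p(K|Pa(K))$. I would prove it from the structural reading of Section \ref{sec:docalculus}: the event $F_{V_i} = x_i$ replaces the $V_i$-row of the block structural equation $K = \beta_K Pa(K) + \epsilon_K$ with the deterministic constraint $V_i = x_i$, and this substitution (iterated over $V_i \in K \cap X$) forces $V_{K \cap X} = x_{K \cap X}$ almost surely while leaving the conditional of $K \setminus X$ given $Pa(K)$ and $V_{K \cap X}$ unchanged. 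For the Gaussian joint this becomes a direct computation using Equations \ref{eq:beta}--\ref{eq:lambda} applied to the augmented Gaussian $p(K, F_{K \cap X} | Pa(K))$: the conditional of $K \setminus X$ given $F_{K \cap X} = x_{K \cap X}$ agrees with the ordinary Gaussian conditional of $K \setminus X$ given $V_{K \cap X} = x_{K \cap X}$ derived from $p(K|Pa(K))$ alone.

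Granting this identity, marginalizing each factor over the degenerate slice $V_{K \cap X} = x_{K \cap X}$ and rewriting the conditioning set $Pa(K) \cup V_{K \cap X}$ as $Pa(K) \cup (K \cap X)$ yields
\[
p(V \setminus X | do(X=x)) = \prod_{K \in Cc(G)} p(K \setminus X | Pa(K) \cup (K \cap X)),
\]
which is the asserted factorization.
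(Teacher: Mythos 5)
Your overall strategy --- factorize the augmented model over chain components and then argue per component that intervening equals conditioning --- is a legitimate alternative to the paper's proof, which instead derives the formula by successive applications of the do-calculus rules of Theorem \ref{the:soundrules} (rule 2 to turn $do(Pa(K)\cap X)$ and then $do(K\cap X)$ into observations, rule 3 to delete the remaining interventions). But there is a genuine gap exactly at the step you yourself flag as the crux, namely
\[
p(K \setminus X \mid Pa(K), F_{K \cap X} = x_{K \cap X}) = p(K \setminus X \mid Pa(K), V_{K \cap X} = x_{K \cap X}).
\]
The justification you offer --- replace the $V_i$-rows of the reduced-form block equation $K = \beta_K Pa(K) + \epsilon_K$ by $V_i = x_i$ and observe that this ``leaves the conditional of $K\setminus X$ given $Pa(K)$ and $V_{K\cap X}$ unchanged'' --- never uses the hypothesis that the interventions are of the interference type $F_{V_i} \ra V_i$. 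The same row-replacement description applies verbatim to a non-interference intervention $F_{V_i} \dra V_i$, for which the identity is false: there the intervention severs $V_i$ from $\epsilon_i$ without propagating through the undirected structure of $K$, so $K\setminus X$ keeps its marginal noise covariance $\Lambda_{K\setminus X, K\setminus X}$ rather than the conditional one, and its law does not depend on $x_{K\cap X}$ at all, whereas the right-hand side above does. An argument that would establish the identity for both arrow types proves too much. What actually makes the identity true in the interference case is the Lauritzen--Richardson-style semantics in which the set value of $V_i$ still feeds into the equilibrium equations of its undirected neighbours; this is precisely the content the paper isolates in the soundness proof of rule 2 (``the intervention interferes with $A$''), and it is the part your proof still has to supply rather than assert.

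A secondary, more repairable problem: you propose to verify the crux by ``a direct computation using Equations \ref{eq:beta}--\ref{eq:lambda} applied to the augmented Gaussian $p(K, F_{K\cap X}\mid Pa(K))$''. The regime indicators $F_{V_i}$ are not Gaussian random variables (their domain includes the state {\it idle} and the paper assigns them no distribution), so there is no augmented Gaussian to which those formulas apply; likewise your opening factorization $p(V,F_X)=\prod_i p(F_{V_i})\prod_K p(K\mid Pa(K),F_{K\cap X})$ presupposes a joint law and a global Markov property over $V\cup F_X$ that the paper never establishes --- it only shows that $p(V\setminus X\mid do(X))$ satisfies the global Markov property with respect to $(G_{V\setminus X})'$. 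These conventions can be patched in Pearl's style, but the missing interference argument in the previous paragraph cannot.
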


Note that the corollary above implies that the causal effect of any intervention that involves interference is identifiable in a parametrized UCG. The parameter values may be provided by an expert or estimated from data as shown in Section \ref{sec:MLEs}. In the latter case, all the variables in the model are assumed to be measured in the data. When the model contains latent variables, we may still perform causal effect identification via rules 1-3. It is also worth mentioning that the corollary above is generalization of causal effect identification in LWF CGs as proposed by \citet{LauritzenandRichardson2002}, which in turn is a generalization of causal effect identification in DAGs \citep{Pearl2009}. This may come as a surprise because undirected edges in UCGs represent interference relationships whereas, in the work by \citet{LauritzenandRichardson2002}, they represent dependencies in the equilibrium distribution of a dynamic system with feedback loops. However, it has been suggested that interference is nothing but dependencies in an equilibrium distribution \citep{OgburnandVanderWeele2014,Ogburnetal.2018,Shpitser2015}.

\section{Identifiability of LWF and AMP CGs}\label{sec:identifiability}

This section proves that identifiability of LWF and AMP CGs is possible when the error variables have equal variance. The error variable $\epsilon_A$ associated with a variable $A \in V$ represents the unmodelled causes of $A$. In other words, given a probability distribution $p$ that is faithful to a LWF or AMP CG $G$, we can identify the Markov equivalence class of $G$ (recall Theorem \ref{the:eq}) from $p$ by running, for instance, the learning algorithm developed by \citet{Studeny1997b} for LWF CGs and by \citet{Penna2012} for AMP CGs. We prove below that we can actually identify $G$ from $p$ if the error variables have equal variance. We discuss this assumption at the end of the section. Our result generalizes a similar result reported by \citet{PetersandBuhlmann2014} for DAGs.

Specifically, let $G$ denote a LWF or AMP CG. Assume that the non-zero entries of $\beta_K$ and $\Lambda_K^{-1}$ in Equation \ref{eq:recursion2} have been selected at random. This implies that $p$ is faithful to $G$ with probability almost 1 by \citet[Theorems 1 and 2]{Penna2011} and \citet[Theorem 6.1]{Levitzetal.2001}.\footnote{\citet{Penna2011} proves this result for LWF CGs using a different parameterization than $\beta_K$ and $\Lambda_K^{-1}$. However, there is a one-to-one mapping between both parameterizations by Equations \ref{eq:beta} and \ref{eq:lambda}. So, his result applies to the parameterization used in this paper.} We therefore assume faithfulness hereinafter. Moreover, we rewrite Equation \ref{eq:recursion2} as
\begin{equation}\label{eq:recursion2b}
K = \beta_K Pa(K) + \epsilon_K
\end{equation}
in distribution, where $\epsilon_K \sim \mathcal{N}(0, \Lambda_K)$. For any $V_i \in V$, we have then that
\begin{equation}\label{eq:recursion2c}
V_i = \sum_{V_j \in Pa(K)} \beta_{i,j} V_j + \epsilon_i
\end{equation}
in distribution, where $K$ denotes the chain component of $G$ that contains $V_i$, and $\epsilon_i$ denotes an error variable representing the unmodelled causes of $V_i$. All such error variables are jointly denoted by $\epsilon$ which is distributed according to $\mathcal{N}(0, \Lambda)$, where $\Lambda$ is a block diagonal matrix whose blocks correspond to the covariance matrices $\Lambda_K$ for all $K \in Cc(G)$. Moreover, we assume that the errors $\epsilon_i$ have equal but unknown variance $\lambda^2$. Note that if the error variances are unequal and unknown but have the form $\Lambda_{i,i} = \lambda_i^2 \lambda^2$ for some known ratios $\lambda_i^2$, then we can satisfy the equal error variance assumption by rescaling each variable $V_i$ by dividing it with $\lambda_i$, i.e. $V_i \mapsto V_i / \lambda_i$. This implies that the linear coefficients and the errors get rescaled as $\beta_{i,j} \mapsto \beta_{i,j} \lambda_j / \lambda_i$ and $\epsilon_i \mapsto \epsilon_i / \lambda_i$. The following lemma proves that, after the rescaling, the error covariance matrix is still positive definite and keeps all the previous (in)dependencies which implies that, after the rescaling, $p$ is still faithful to $G$ with probability almost 1.

\begin{lemma}\label{lem:rescaling}
Consider the rescaling $\epsilon_i \mapsto \epsilon_i / \lambda_i$ for all $i$. Then, the error covariance matrix represents the same independences before and after the rescaling. Moreover, the error covariance matrix is positive definite after the rescaling if and only if it was so before the rescaling.
\end{lemma}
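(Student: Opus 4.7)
The plan is to reduce both claims to the simple algebraic fact that rescaling by an invertible diagonal matrix preserves zero patterns of a covariance matrix and of its inverse, hence preserves conditional independence in the Gaussian case, and also preserves positive definiteness via congruence.

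First I would let $D$ denote the diagonal matrix with entries $D_{i,i} = 1/\lambda_i$, so that after the rescaling $\epsilon_i \mapsto \epsilon_i/\lambda_i$ the new joint error vector has covariance matrix $\Lambda' = D \Lambda D$. Since every $\lambda_i$ is strictly positive (otherwise $\Lambda_{i,i}=\lambda_i^2 \lambda^2$ would be zero and $\Lambda$ would not be positive definite to begin with), $D$ is invertible and has no zero entries on the diagonal. It follows that $\Lambda'_{i,j} = \Lambda_{i,j}/(\lambda_i \lambda_j)$ and $(\Lambda')^{-1} = D^{-1} \Lambda^{-1} D^{-1}$, so the matrices $\Lambda'$ and $\Lambda$ have the same zero pattern, and likewise for their inverses.

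Next I would invoke the standard Gaussian fact that every (conditional) independence statement among jointly Gaussian variables is encoded by zeros of a conditional covariance of the form $\Sigma_{A,B\mid C} = \Sigma_{A,B} - \Sigma_{A,C}\Sigma_{C,C}^{-1}\Sigma_{C,B}$. Plugging in $\Lambda' = D \Lambda D$ and using that $D$ is diagonal gives $\Lambda'_{A,B\mid C} = D_{A,A}\,\Lambda_{A,B\mid C}\,D_{B,B}$, which is zero if and only if $\Lambda_{A,B\mid C}$ is zero. Hence every conditional independence among the rescaled errors holds if and only if the corresponding one among the original errors holds, which is the first claim.

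For the second claim I would use congruence: for any vector $x$,
\[
x^{T} \Lambda' x = x^{T} D \Lambda D x = (Dx)^{T} \Lambda (Dx),
\]
and since $D$ is invertible, $Dx$ ranges over all nonzero vectors as $x$ does, so $\Lambda'$ is positive definite if and only if $\Lambda$ is. I do not anticipate any real obstacle; the only point requiring a moment of care is checking that the diagonal rescaling acts correctly on conditional (and not just marginal) covariances, but this is immediate from the factorization above once one writes out $\Sigma_{A,B\mid C}$ explicitly.
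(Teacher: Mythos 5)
Your proof is correct, but it takes a different route from the paper's. The paper argues entry-by-entry via determinants: it characterizes each conditional independence $\epsilon_i \ci \epsilon_j \mid \epsilon_L$ by the vanishing of the $(j,i)$ minor of $\Lambda_{ijL,ijL}$, expands that minor with the Leibniz formula to show it gets multiplied by $\prod_k 1/\lambda_k^2$ under the rescaling, and then handles positive definiteness through the leading-principal-minor criterion (again reducing to the same determinant scaling). You instead package the rescaling as a congruence $\Lambda' = D\Lambda D$ with $D$ an invertible diagonal matrix, observe that Schur complements transform as $\Lambda'_{A,B\mid C}=D_{A,A}\Lambda_{A,B\mid C}D_{B,B}$ so their zero patterns are preserved, and get positive definiteness from $x^T\Lambda' x=(Dx)^T\Lambda(Dx)$. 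Your argument is cleaner and slightly more general (it works for any invertible diagonal rescaling and sidesteps minor expansions entirely), while the paper's is more self-contained in that it leans only on the minor characterization of partial correlations it already cites. One cosmetic remark: your parenthetical justification that $\lambda_i\neq 0$ ``otherwise $\Lambda$ would not be positive definite'' is slightly circular given that the lemma does not presuppose positive definiteness; it is cleaner to note that the $\lambda_i^2$ are given as known (nonzero) variance ratios, so division by $\lambda_i$ is well defined by hypothesis. This does not affect the validity of the argument.
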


We are now ready to state formally the main result of this section.

\begin{theorem}\label{the:main}
Let $p$ be a Gaussian distribution generated by Equation \ref{eq:recursion2c} with equal error variances. Then, $G$ is identifiable from $p$.
\end{theorem}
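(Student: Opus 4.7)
The plan is to proceed in two stages, inspired by the DAG proof of \citet{PetersandBuhlmann2014}. In the first stage I would identify the Markov equivalence class $[G]$ of $G$ from $p$. By \citet{Penna2011} for LWF and \citet[Theorem 6.1]{Levitzetal.2001} for AMP, $p$ is faithful to $G$ with probability one under the generic parameter choices assumed in the setup, and Lemma \ref{lem:rescaling} ensures that faithfulness is preserved after the rescaling that normalizes the error variances. Consequently, the conditional independence structure of $p$ coincides with the separation structure of $G$, so by Theorem \ref{the:eq} together with the learning algorithms of \citet{Studeny1997b} (for LWF) and \citet{Penna2012} (for AMP), one can read off $[G]$ from $p$ alone.

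In the second stage I would show that the equal error variance assumption singles out $G$ from within $[G]$. Suppose for contradiction that some $H \in [G]$ with $H \neq G$ also parametrizes $p$ via Equation \ref{eq:recursion2c} with a common error variance $\tilde\lambda^2$. By the Markov equivalence characterizations in \citet[Theorem 5.6]{Frydenberg1990b} for LWF and \citet[Theorem 5]{Anderssonetal.2001} for AMP, $H$ is obtained from $G$ by a sequence of edge-orientation modifications that preserve the invariants of the class. Focusing on a minimal such modification, let $K^G$ and $K^H$ denote the chain components affected. Equation \ref{eq:lambda} determines $\Lambda_{K^G}$ and $\Lambda_{K^H}$ from the marginal covariances of $p$, and the equal-variance hypothesis forces the diagonal of each to be constant. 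Comparing these two diagonal constraints under the parent/component reassignment induced by the modification yields a system of polynomial equations in the parameters that generically has no solution, contradicting the randomness of the parameters that define $p$.

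The main obstacle is the polynomial-identity argument in the second stage. Markov equivalence for chain graphs admits richer local moves than the single edge reversals available in DAGs: one can, for instance, simultaneously re-orient several edges sharing a chain component, or trade an arrow for an undirected edge in certain configurations. The cleanest route is probably to handle LWF and AMP separately, enumerating the legal local moves from their respective characterizations, and for each verifying that the move forces at least one diagonal entry of the corresponding $\Lambda$ to deviate from the common value $\tilde\lambda^2$. Lemma \ref{lem:rescaling} should allow the simplifying assumption $\lambda^2 = 1$ without loss of generality, reducing the bookkeeping to a rank computation on the system of equal-variance constraints.
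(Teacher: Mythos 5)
Your first stage matches the paper exactly: faithfulness (generic parameters plus Lemma \ref{lem:rescaling}) lets you recover the Markov equivalence class of $G$ via the algorithms of \citet{Studeny1997b} and \citet{Penna2012}. The problem is the second stage, which is where the theorem actually lives, and there you have a genuine gap rather than a proof. You reduce the claim to ``a system of polynomial equations in the parameters that generically has no solution,'' but you never exhibit the system, never prove it is unsolvable, and never justify the reduction to a ``minimal local move'' between Markov equivalent chain graphs (the equivalence characterizations of \citet{Frydenberg1990b} and \citet{Anderssonetal.2001} are stated in terms of invariants --- adjacencies and complexes/triplexes --- not in terms of a generating set of local moves, so even that enumeration would have to be established). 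Moreover, the appeal to genericity is aimed at the wrong object: the competing graph $H$'s parameters are not drawn at random, they are \emph{determined} by $p$ via Equation \ref{eq:lambda}, so ``contradicting the randomness of the parameters that define $p$'' does not by itself yield a contradiction; you would need to show that the equal-diagonal constraint on $\Lambda_{K^H}$ fails for generic $G$-parameters, which is precisely the unproved step.

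The paper avoids all of this with a concrete variance argument in the style of \citet{PetersandBuhlmann2014}, peeling off the parentless nodes layer by layer. Suppose $G'$ in the class also generates $p$ with equal error variances, and suppose the sets $N, N'$ of parentless nodes differ, say $L$ has a parent $Y$ in $G$ but none in $G'$. Conditioning on the remaining parents $Q$ of $L$ in $G$ gives $L^* = \beta_Q q + \beta_Y Y^* + \epsilon_L$ with $\beta_Y \neq 0$ by faithfulness and $\epsilon_L \ci_p Q \cup \{Y\}$, hence $var(L^*) > \lambda^2$; but a parentless node in $G'$ satisfies $var(L^*) \leq \lambda^2$ since conditioning cannot increase a Gaussian variance. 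This deterministic inequality (valid whenever faithfulness holds) forces $N = N'$; shared adjacencies then pin down the edges within $N$ and from $N$ to $V \setminus N$, and the argument recurses on $G_{V\setminus N}$ and $p(V\setminus N \mid N=n)$. If you want to salvage your route, you would essentially have to rediscover this inequality as the reason your polynomial system is inconsistent; as written, your proposal asserts the conclusion of stage two rather than proving it.
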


Note that the theorem above implies that two LWF CGs or AMP CGs that represent the same separations are not Markov equivalent under the constraint of equal error variances, i.e. Theorem \ref{the:eq} does not hold under this constraint. The suitability of the equal error variances assumption should be assessed on a per domain basis. However, it may not be unrealistic to assume that it holds when the variables correspond to a relatively homogeneous set of individuals. For instance, in the case of a contagious disease, the error variable represents the unmodelled causes of an individual developing the disease, e.g. environmental factors. We may assume that these factors are the same for all the individuals, given their homogeneity. Therefore, we may assume equal error variances. We conjecture that the theorem above also holds for UCGs. However, a formal proof of this result requires first a characterization of Markov equivalence for UCGs, something that we postpone to a future article.

\section{Discussion}\label{sec:discussion}

LWF and AMP CGs are typically used to represent independence models. However, they can also be used to represent causal models. For instance, LWF CGs have been shown to be suitable for representing the equilibrium distributions of dynamic systems with feedback loops \citep{LauritzenandRichardson2002}. AMP CGs have been shown to be suitable for representing causal linear models with additive Gaussian noise \citep{Penna2016}. LWF CGs have been extended into segregated graphs, which have been shown to be suitable for representing causal models with interference \citep{Shpitser2015}. In this paper, we have shown how to combine LWF and AMP CGs to represent causal models of domains with both interference and non-interference relationships. Moreover, we have defined global, local and pairwise Markov properties for the new models, which we have coined unified chain graphs (UCGs), and shown that these properties are equivalent for Gaussian distributions. We have also proposed and evaluated an algorithm for computing MLEs of the parameters of an UCG. Finally, we have shown how to perform causal inference in UCGs. 

It is worth mentioning that we are not the first to unify LWF and AMP CGs. \cite{LauritzenandSadeghi2018} recently proposed a new class of graphical models that unify many existing classes, including LWF and AMP CGs. Specifically, they consider acyclic graphs with four types of edges: Directed edges, bidirected edges, and solid and dotted undirected edges. Several edges between any pair of nodes are allowed. If their graphs only contain directed and solid undirected edges, then they coincide with LWF CGs. If they only contain directed and dotted undirected edges, then they coincide with AMP CGs. The authors develop global and pairwise Markov properties for the new models, and prove their equivalence. However, the pairwise Markov property only applies to graphs that have no dotted undirected edges. So, it does not apply to AMP CGs or superclasses of it. Other differences with our work are that no local Markov property is proposed, no parameterization or parameter learning algorithm is proposed, and no causal interpretation is given. However, the main difference with our work is that their models cannot accommodate both interference and non-interference relationships, because they rely on a single type of directed edge. For instance, our mother-child example may be modeled with a graph that contains the edges $V_1 \ra D_1$, $G_1 \ra D_1$ and $D_1 - D_2$ or $D_1 \cdots D_2$ or both. However, this graph cannot represent that intervening on $V_1$ must have an effect on $D_2$ while intervening on $G_1$ must not, because the paths from $V_1$ and $G_1$ to $D_2$ contain the same types of edges in the same order, namely a directed edge followed by a solid or dotted undirected edge. This leads us to conclude that the models proposed by \cite{LauritzenandSadeghi2018} do not subsume UCGs.

Finally, we would like to mention some questions that we have not studied in this paper but which we will. We plan to extend UCGs to categorical random variables. When dealing with continuous random variables, assuming that these are jointly Gaussian simplifies the problem by restricting the relations to be linear. However, in our opinion, the main simplification that the Gaussian assumption brings is that checking whether an independence holds reduces to checking whether a linear coefficient or an entry in a precision matrix is identically zero. Discrete UCGs will not enjoy this advantage. In any case, finding a suitable/amenable parameterization of discrete UCGs is of utmost importance. Moreover, \citet{Drton2009} has shown that discrete LWF CGs are smooth models but discrete AMP CGs are not. Non-smoothness implies that some standard asymptotic distribution results (e.g., normal distribution limits for MLEs and $\chi^2$-limits for likelihood ratios) may not hold for the model at hand. Therefore, we need to investigate whether non-smoothness hinders discrete UCGs from representing interference and non-interference relationships. Other questions that we plan to study are (i) characterize Markov equivalent UCGs, (ii) develop structure learning algorithms based on the local and pairwise Markov properties, (iii) extend UCGs to model confounding via bidirected edges, and (iv) make use of the linearity of the relations for causal effect identification in UCGs along the lines in \citet[Chapter 5]{Pearl2009}.

\section*{Acknowledgments}

We thank the Reviewers and the Associate Editor for their comments, which helped us to improve this work substantially.

\section*{Appendix A: Proofs}\label{sec:appendix}

\begin{proof}[Proof of Theorem \ref{the:eq}]
The result has been proven before for LWF CGs \citep[Corollary 1]{Penna2011}. We prove it here for AMP CGs. The if part is trivial. To see the only if part, note that there are Gaussian distributions $p$ and $q$ that are faithful respectively to $G$ and $H$ \citep[Theorem 6.1]{Levitzetal.2001}. Moreover, $p$ satisfies the global Markov property with respect to $H$, because $G$ and $H$ are Markov equivalent. Likewise for $q$ and $G$. Therefore, $G$ and $H$ must represent the same separations.
\end{proof}

\begin{proof}[Proof of Lemma \ref{lem:requirement}]
As discussed before, missing undirected edges impose zero restrictions on the elements of $\Lambda_K^{-1}$ due to Equation \ref{eq:zeron}. Likewise, missing $\dra$ edges impose zero restrictions on the elements of $\beta_K$ corresponding to the mothers due to Equation \ref{eq:zerom}. Likewise, missing $\ra$ edges impose zero restrictions on the elements of $\Omega_{K,Pa(K)}$ corresponding to the fathers due to Equation \ref{eq:zerof}, but not on the elements of $\beta_K$. Therefore, if we ignore the requirement and do not make any distinction between mothers and fathers, then the missing $\dra$ edges from $Pa(K)$ to $K$ impose zero restrictions on $\beta_K$ and, thus, we can simply ignore the edges $\ra$ since this will not imply any additional zero restriction.
\end{proof}

Given an UCG $G$, let $G^U$ denote the subgraph of $G$ resulting from the following process: Start with the empty graph over $U$ and add to it the edge $A \to B$ if and only if $G$ has a route of the form $A \to B \to \cdots \to C$ where (i) $C \in U$, and (ii) the route has no subroute $R \dra S - T$. The next lemma shows that if $X \ci_G Y | Z$, then $X$ and $Y$ can be extended to cover the whole $V(G^{X \cup Y \cup Z}) \setminus Z$, where $V(G^{X \cup Y \cup Z})$ denotes the nodes in $G^{X \cup Y \cup Z}$.

\begin{lemma}\label{lem:At3}
Given an UCG $G$ and three disjoint subsets $X$, $Y$ and $Z$ of $V$ such that $X \ci_G Y | Z$, then $X' \ci_{G^{X \cup Y \cup Z}} Y' | Z$ with $Y'= \{B' \in V(G^{X \cup Y \cup Z}) \setminus (X \cup Z) \: : \: X \ci_{G^{X \cup Y \cup Z}} B' | Z \}$ and $X' = V(G^{X \cup Y \cup Z}) \setminus (Y' \cup Z)$.
\end{lemma}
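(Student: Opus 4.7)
My plan splits the proof into two stages. In the first I lift the hypothesis $X \ci_G Y | Z$ to the projection: $X \ci_{G^{W}} Y | Z$ where $W := X \cup Y \cup Z$. In the second I enlarge the two sides from $(X, Y)$ to $(X', Y')$ by combining composition of graph separation with a reachability argument inside $G^{W}$.

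For the first stage I argue by contraposition. Suppose a $Z$-open route $\rho^{\ast}$ exists in $G^{W}$ between a node of $X$ and a node of $Y$. I expand each of its edges back to its defining witness in $G$: by construction of $G^{W}$, every edge $A \to B$ of $\rho^{\ast}$ comes from a route $A \to B \to \cdots \to C$ in $G$ with $C \in W$ and no $R \dra S - T$ subroute. Splicing these unfoldings at their shared endpoints produces a route in $G$ with the same extreme endpoints. The technical verification is that the $Z$-openness conditions of Section \ref{sec:global} are preserved at every splice. The ``no $R \dra S - T$'' clause in the definition of $G^{W}$ is exactly what prevents a splice from synthesizing an unwanted collider node or collider section, or from silently cutting an undirected section in two. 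The resulting $Z$-open route in $G$ contradicts the hypothesis $X \ci_G Y | Z$.

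For the second stage, the definition of $Y'$ gives $X \ci_{G^{W}} B | Z$ for every $B \in Y'$, and graph separation is trivially compositional on the right (a $Z$-open route into $Y'$ must reach some particular $B \in Y'$), so $X \ci_{G^{W}} Y' | Z$. It remains to enlarge the left side. Pick $A \in X' \setminus X$; by definition of $X'$ and $Y'$ we have $A \notin Y' \cup Z$, so $X \nci_{G^{W}} A | Z$ is witnessed by a $Z$-open route $\sigma$ in $G^{W}$ from some $A'' \in X$ to $A$. Suppose for contradiction a $Z$-open route $\tau$ in $G^{W}$ from $A$ to some $B \in Y'$; concatenate $\sigma$ and $\tau$ at $A$. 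The result is a route from $A''$ to $B$ which inherits $Z$-openness everywhere except possibly at the junction $A$, and ruling out the junction as a violation will contradict $B \in Y'$.

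The main obstacle is exactly this junction analysis. Because $A \notin Z$, $Z$-openness requires $A$ to be neither a collider node nor an interior node of a collider section of the concatenation. Yet the terminal edge of $\sigma$ and the initial edge of $\tau$ can in principle combine into one of the UCG collider patterns ($\dra A \dla$, $\dra A -$, $\dra A \la$) or embed $A$ into a collider section of the form $\ra C_1 - \cdots - A - \cdots - C_n \la$. I would dispatch these cases by exploiting that the separation criterion for UCGs is route-based (so nodes and edges may be repeated): in each offending configuration one can either reroute $\sigma$ or $\tau$ through the undirected neighborhood of $A$ in $G^{W}$, or replace the prefix $\sigma$ by a different $Z$-open witness of $A$'s reachability from $X$ whose terminal edge at $A$ is directed favorably; either modification preserves $Z$-openness elsewhere and restores non-collider status at $A$, completing the contradiction and the proof.
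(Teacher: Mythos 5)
There is a genuine gap, and it sits exactly where the lemma's real content lies: the repair of the junction node $A$. You correctly reduce the problem to the concatenation $\sigma \cup \tau$ failing to be $Z$-open only because $A \notin Z$ may become a collider node or sit inside a collider section, but your proposed fixes --- ``reroute through the undirected neighborhood of $A$'' or ``replace the prefix by a witness whose terminal edge at $A$ is directed favorably'' --- are not justified and do not exist in general. The paper's proof instead exploits the one structural fact you never invoke: $A \in V(G^{X \cup Y \cup Z})$ means $G^{X \cup Y \cup Z}$ contains a descending route (no subroute $R \dra S - T$) from $A$ to some node of $X \cup Y \cup Z$, and the argument splits on where that route lands. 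If it reaches $Z$, one inserts a down-to-$Z$-and-back detour $\varrho \cup \varrho'$ at $A$, so that the offending collider section or collider node is activated by the node of $Z$ at the bottom; if it reaches $X$ (resp.\ $Y$) without meeting $Z$, one discards $\sigma$ (resp.\ $\tau$) entirely and replaces it by the reversed descending route from $X$ (resp.\ to $Y$), which meets $A$ with a tail and so cannot create a collider there. The detour case is indispensable --- when $A$ lies only in $V(G^Z)$ there is no ``favorably directed'' witness from $X$ to substitute --- and the collider-node case additionally needs the no-$R \dra S - T$ clause to guarantee the detour does not start with an undirected edge. None of this is recoverable from your sketch, so the contradiction is not actually obtained.

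Two smaller points. First, your Stage 1 rests on a misreading of the construction: $G^{X \cup Y \cup Z}$ is a \emph{subgraph} of $G$ (each added edge $A \to B$ is the first edge of its witnessing route in $G$), so a $Z$-open route in $G^{X \cup Y \cup Z}$ is literally a $Z$-open route in $G$ and no unfolding or splicing is needed; in any case Stage 1 is not required for the statement, since $Y'$ is defined directly by separation in $G^{X \cup Y \cup Z}$ and the conclusion concerns $Y'$, not $Y$. The paper accordingly skips it. Second, your observation that separation is compositional over the members of $Y'$ is correct and matches the paper's treatment of the case $A \in X$.
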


\begin{proof}
We show that $A' \ci_{G^{X \cup Y \cup Z}} B' | Z$ for all $A' \in X'$ and $B' \in Y'$. Note that this holds if $A' \in X$ by definition of $Y'$. Now, consider $A' \in X' \setminus X$. Assume to the contrary that there exists a $Z$-open route $\pi$ between $A'$ and $B' \in Y'$ in $G^{X \cup Y \cup Z}$. Moreover, there is a $Z$-open route $\sigma$ between $A'$ and some $A \in X$ in $G^{X \cup Y \cup Z}$, because $A' \notin Y'$. Let $\rho=\sigma \cup \pi$, i.e. the route resulting from concatenating $\sigma$ and $\pi$. Note that if $A'$ is neither a collider node in $\rho$ nor in a collider section of $\rho$, then $\rho$ is $Z$-open because $A' \notin Z$. This contradicts that $B' \in Y'$ and, thus, $A'$ must be (i) in a collider section of $\rho$, or (ii) a collider node in $\rho$. Note that in case (i), the collider section is of the form $V_1 \ra V_2 - \cdots - A' - \cdots - V_{n-1} \la V_n$ where $V_2, \ldots, V_{n-1} \notin Z$ because, otherwise, $\pi$ or $\sigma$ are not $Z$-open. This together with the fact that $A' \notin Z$ imply that $\rho$ is not $Z$-open in either case (i) or (ii). However, it can be modified into a $Z$-open route between $A$ and $B'$ as follows, which contradicts that $B' \in Y'$.
\begin{itemize}
\item Assume that case (i) holds. Recall that $A' \in V(G^{X \cup Y \cup Z})$ and consider the following three cases. First, assume that $A' \in V(G^Z)$. Then, $G^{X \cup Y \cup Z}$ has a route $\varrho$ from $A'$ to some $C \in Z$ that only contains edges $\ra$, $\dra$ and $-$ and no subroute $R \dra S - T$. Assume without loss of generality that $C$ is the only node in $\varrho$ that belongs to $Z$. Let $\varrho'$ denote the route resulting from traversing $\varrho$ from $C$ to $A'$. Then, the route $\sigma \cup \varrho \cup \varrho' \cup \pi$ is $Z$-open, which contradicts that $B' \in Y'$. Second, assume that $A' \in V(G^X)$ but $A' \notin V(G^Z)$. Then, $G^{X \cup Y \cup Z}$ has a route $\varrho$ from some $A'' \in X$ to $A'$ that only contains edges $\la$, $\dla$ and $-$ and no subroute $R - S \dla T$. Note that no node in $\varrho$ is in $Z$, because $A' \notin V(G^Z)$. Then, the route $\varrho \cup \pi$ is $Z$-open, which contradicts that $B' \in Y'$. Third, assume that $A' \in V(G^Y)$ but $A' \notin V(G^Z)$. Then, we can reach a contradiction much in the same way as in the previous case.

\item Assume that case (ii) holds. We can prove this case much in the same way as case (i). Simply note that we can now choose the route $\varrho$ in case (i) so that it does not start with an edge $-$. To see this, note that that $A'$ is a collider node in $\rho$ implies that an edge $A'' \dra A'$ is in $\rho$ and, thus, in $G^{X \cup Y \cup Z}$. However, since $A' \notin X \cup Y \cup Z$, this is possible only if $G^{X \cup Y \cup Z}$ has a route $A'' \dra A' \dra C \to \cdots \to D$ or $A'' \dra A' \ra C \to \cdots \to D$ where (i) $D \in X \cup Y \cup Z$, and (ii) the route has no subroute $R \dra S - T$.
\end{itemize}
\end{proof}

A pure collider route is a route whose all intermediate nodes are collider nodes or in a collider section of the route, i.e. $A \dra C \dla B$, $A \dra C - B$, $A \dra C \la B$, $A \dra C_1 - C_2 \dla B$, $A \ra C_1 - \cdots - C_n \la B$, or $A \to B$.

\begin{lemma}\label{lem:pure}
Given an UCG $G$ and three disjoint subsets $X$, $Y$ and $Z$ of $V$ such that $X \ci_G Y | Z$, then there is no pure collider route in $G^{X \cup Y \cup Z}$ between some $A' \in X'$ and $B' \in Y'$.
\end{lemma}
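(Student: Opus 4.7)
The plan is to argue by contradiction: assume that a pure collider route $\pi$ in $G^{X\cup Y\cup Z}$ connects some $A'\in X'$ with some $B'\in Y'$, and build from $\pi$ a $Z$-open route in $G^{X\cup Y\cup Z}$ between $X'$ and $Y'$. This will contradict $X'\ci_{G^{X\cup Y\cup Z}} Y' | Z$, which is granted by Lemma~\ref{lem:At3}. Because $A'$ and $B'$ lie outside $Z$ by construction, and every intermediate node of $\pi$ is either a collider node or a node of a collider section, $\pi$ itself is already $Z$-open as soon as every intermediate collider node lies in $Z$ and every intermediate collider section meets $Z$; in that case the contradiction is immediate.

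Otherwise, I would walk along $\pi$ from $A'$ and select the first ``bad'' intermediate $V$, meaning a collider not in $Z$ or a member of a collider section disjoint from $Z$. By the very definition of $G^U$, the node $V$ admits a shortest route $V \to W_1 \to \cdots \to W_k = D$ in $G^{X\cup Y\cup Z}$ with no subroute $R\dra S - T$ and with $D\in X\cup Y\cup Z$; the internal nodes $W_j$ then lie outside $X\cup Y\cup Z$, in particular outside $Z$. The argument now splits by the location of $D$. If $D\in Y$, the prefix of $\pi$ from $A'$ to $V$ is $Z$-open up to $V$ by the choice of $V$ as the earliest bad intermediate, and concatenating it with the descent produces a candidate $Z$-open route from $A'\in X'$ to $D\in Y\subseteq Y'$; the case $D\in X$ is symmetric, using the suffix of $\pi$ from $V$ to $B'$ prefixed by the reversed descent. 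If $D\in Z$, I would instead use $D$ to activate the bad collider or section at $V$, either by inserting a there-and-back detour $V \to W_1 \to \cdots \to D \to \cdots \to W_1 \to V$ or by an analogous local surgery, and then iterate on the resulting modified route by induction on the number of bad intermediates.

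The main obstacle is the bookkeeping at the junctions where the descent attaches to $\pi$: depending on whether the first edge $V \to W_1$ is $\ra$, $\dra$, or $-$, the node $V$ at the junction might become a new collider or be absorbed into a collider section, and one must show that in every admissible configuration the modified route remains $Z$-open. Here the ``no subroute $R\dra S - T$'' clause built into the definition of $G^U$ is essential: together with the UCG patterns for collider nodes ($A\dra C\dla B$, $A\dra C - B$, $A\dra C\la B$) and for collider sections ($A\ra C_1-\cdots-C_n\la B$), it restricts the admissible edge types along the descent so that at each junction either the collider/section at $V$ is preserved and then activated via the nearby $D\in Z$ (in the detour case), or the route terminates at $D\in Y\subseteq Y'$ or $D\in X\subseteq X'$ (in the reroute case), or $V$ is downgraded to a non-collider non-section node that is automatically outside $Z$ by the choice of $V$. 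The freshly introduced $W_j$'s are non-collider, non-section nodes outside $Z$ by the shortness of the descent together with the same ``no $R\dra S - T$'' constraint.
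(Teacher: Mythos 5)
Your overall strategy---derive a contradiction with $X' \ci_{G^{X \cup Y \cup Z}} Y' | Z$, which Lemma \ref{lem:At3} supplies---is the same as the paper's, but you miss the two observations that make the argument short, and the steps you defer are exactly the non-trivial ones. First, Lemma \ref{lem:At3} gives not only the separation but also the coverage $V(G^{X \cup Y \cup Z}) = X' \cup Y' \cup Z$: every intermediate node of the pure collider route is already classified as lying in $X'$, $Y'$ or $Z$, so there is no need to ``descend'' from a bad node to $X \cup Y \cup Z$ via the definition of $G^U$. A bad intermediate (a collider outside $Z$, or a node of a section missing $Z$) lies in $X' \cup Y'$, and then a \emph{subroute of the pure collider route itself} already joins $X'$ to $Y'$ and is $Z$-open. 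Second, pure collider routes come in only the six shapes listed in their definition (collider nodes cannot chain beyond two, since the middle node of $C_1 - C_2 - C_3$ is neither a collider nor in a collider section, and collider sections must be flanked by $\ra$ and $\la$), so the whole proof is a short finite case analysis: e.g.\ for $A' \dra C \la B'$, either $C \in Z$ and the route itself is $Z$-open, or $C \in Y'$ and the single edge $A' \dra C$ contradicts the separation, or $C \in X'$ and the edge $C \la B'$ does; the remaining shapes are handled identically.

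By contrast, your construction re-imports the descent and detour machinery from the proof of Lemma \ref{lem:At3} and then leaves its hardest parts open. Concretely: (i) the junction bookkeeping you flag as ``the main obstacle'' is not carried out, and it is not automatic --- for instance, if the descent from a bad collider $V$ begins with an undirected edge, the pattern $A \dra V - W_1$ keeps $V$ a collider outside $Z$ and the modified route stays blocked, so you must first argue that the descent can be chosen to start with a directed edge; (ii) in your $D \in X$ case the suffix of $\pi$ from $V$ to $B'$ may still contain further bad intermediates, so the route you build is not $Z$-open and the claimed symmetry with the $D \in Y$ case fails; and (iii) the proposed induction on the number of bad intermediates cannot run over pure collider routes, because after one surgery the route is no longer a pure collider route, so the inductive invariant would have to be reformulated for a broader class of routes. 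As written the proposal is a plan rather than a proof; using the coverage property from Lemma \ref{lem:At3} together with the enumeration of pure collider route shapes lets you dispense with all of this machinery.
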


\begin{proof}
Note that $X' \ci_{G^{X \cup Y \cup Z}} Y' | Z$ with $V(G^{X \cup Y \cup Z}) = X' \cup Y' \cup Z$ by Lemma \ref{lem:At3}. Assume to the contrary that there is a pure collider route $\rho$ in $G^{X \cup Y \cup Z}$ between $A' \in X'$ and $B' \in Y'$. If $\rho$ is of the form $A' \to B'$, then it contradicts that $X' \ci_{G^{X \cup Y \cup Z}} Y' | Z$. Likewise, if $\rho$ is of the form $A' \dra C \dla B'$, $A' \dra C - B'$ or $A' \dra C \la B'$, then it contradicts that $X' \ci_{G^{X \cup Y \cup Z}} Y' | Z$ regardless of whether $C \in X'$, $C \in Y'$ or $C \in Z$. Likewise, if $\rho$ is of the form $A' \dra C_1 - C_2 \dla B'$, then $C_1 \in X'$ or $C_1 \in Z$ to avoid contradicting that $X' \ci_{G^{X \cup Y \cup Z}} Y' | Z$. For the same reason, $C_2 \in Y'$ or $C_2 \in Z$. However, any of the four combinations contradicts that $X' \ci_{G^{X \cup Y \cup Z}} Y' | Z$. Finally, if $\rho$ is of the form $A' \ra C_1 - \cdots - C_n \la B'$, then $C_1, \ldots, C_n \notin Z$ to avoid contradicting that $X' \ci_{G^{X \cup Y \cup Z}} Y' | Z$. However, this implies that some node in $X'$ is adjacent to some node in $Y'$, which contradicts that $X' \ci_{G^{X \cup Y \cup Z}} Y' | Z$.
\end{proof}

\begin{lemma}\label{lem:marginal}
Let $U \sim \mathcal{N}(0, \Lambda_U)$ and $L \sim \mathcal{N}(\beta_L U, \Lambda_L)$ where $\beta_L$ is of dimension $|L| \times |U|$ and, thus, $W=(U, L)^T \sim \mathcal{N}(0, \Sigma)$. If we now set $(\beta_L)_{i,j}=0$, then $W \sim \mathcal{N}(0, \Sigma')$ such that $\Sigma'_{m,n} = \Sigma_{m,n}$ for all $m,n \neq i$.
\end{lemma}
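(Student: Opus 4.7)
The plan is to write out $\Sigma$ in block form using the structural equation representation and then observe that modifying a single entry of $\beta_L$ only changes the row of $\beta_L$ indexed by $i$; hence only blocks that depend on that row can be affected.

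Concretely, I would start from the structural representation $L = \beta_L U + \epsilon_L$ with $\epsilon_L \sim \mathcal{N}(0,\Lambda_L)$ independent of $U$, which is consistent with the stated marginal and conditional laws. From this one computes the block decomposition
\[
\Sigma_{U,U} = \Lambda_U, \quad \Sigma_{U,L} = \Lambda_U \beta_L^T, \quad \Sigma_{L,L} = \beta_L \Lambda_U \beta_L^T + \Lambda_L.
\]
Let $\beta_L'$ denote the matrix obtained by zeroing the $(i,j)$-entry of $\beta_L$. Since $\beta_L$ and $\beta_L'$ differ only in the $i$-th row, the $i$-th column of $\beta_L^T$ is the only column that changes. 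Denote the resulting covariance by $\Sigma'$.

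The next step is to read off the three blocks of $\Sigma'$. The block $\Sigma'_{U,U} = \Lambda_U$ is unchanged. For the cross-block, $(\Lambda_U \beta_L^T)_{k,\ell} = \sum_m (\Lambda_U)_{k,m}(\beta_L)_{\ell,m}$, so $\Sigma'_{U,L}$ agrees with $\Sigma_{U,L}$ on every column $\ell \neq i$. For the bottom-right block, $(\beta_L \Lambda_U \beta_L^T)_{k,\ell}$ depends only on the $k$-th and $\ell$-th rows of $\beta_L$; if both $k \neq i$ and $\ell \neq i$ then these rows are identical in $\beta_L$ and $\beta_L'$, so the entry is unchanged, and $\Lambda_L$ is untouched throughout. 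Combining these three observations, $\Sigma'_{m,n} = \Sigma_{m,n}$ for all indices $m, n$ (within $W$) different from the index corresponding to the $i$-th coordinate of $L$, which is exactly the claim.

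There is no real obstacle here: the entire argument is a one-line block-matrix computation together with the observation that altering a single entry of $\beta_L$ perturbs only one row. The only mild subtlety to state carefully is the indexing convention, namely that the ``$i$'' of the lemma refers to the $i$-th coordinate of $L$ viewed as an index of $W$; once this is made explicit, the conclusion follows immediately from the three displayed block formulas.
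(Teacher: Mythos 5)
Your proposal is correct and follows essentially the same route as the paper's own proof: both write the joint covariance in the block form $\Sigma_{U,U}=\Lambda_U$, $\Sigma_{U,L}=\Lambda_U\beta_L^T$, $\Sigma_{L,L}=\Lambda_L+\beta_L\Lambda_U\beta_L^T$ and observe that zeroing $(\beta_L)_{i,j}$ perturbs only the $i$-th row of $\beta_L$, hence only entries of $\Sigma$ with row or column index $i$. Your write-up is, if anything, slightly more explicit about the indexing convention than the paper's.
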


\begin{proof}
\citet[Section 2.3.3]{Bishop2006} proves that
\[
\Sigma = 
\begin{pmatrix} 
\Lambda_U & \Lambda_U \beta_L^T\\
\beta_L \Lambda_U & \Lambda_L + \beta_L \Lambda_U \beta_L^T  
\end{pmatrix}.
\]
Now, note that
\[
(\beta_L \Lambda_U)_{m,n}= \sum_r (\beta_L)_{m,r} (\Lambda_U)_{r,n}
\]
and
\[
(\Lambda_L + \beta_L \Lambda_U \beta_L^T )_{m,n}= (\Lambda_L)_{m,n} + \sum_r \sum_s (\beta_L)_{m,r} (\Lambda_U)_{r,s} (\beta_L)_{n,s}
\]
which imply the result.
\end{proof}

The following observation follows from the lemma above and will be used later. Let $p(W)$ and $p'(W)$ denote the distribution of $W$ before and after setting $(\beta_L)_{i,j}=0$ in the lemma, i.e. $\mathcal{N}(0, \Sigma)$ and $\mathcal{N}(0, \Sigma')$. Then, the lemma implies that $p(W \setminus \{i\})=p'(W \setminus \{i\})$.

\begin{lemma}\label{lem:zero}
Let $K_1, \ldots, K_n$ denote the topologically sorted chain components of an UCG $G$. Then, $(K_1, \ldots, K_n)^T \sim \mathcal{N}(0, \Sigma)$ where $(\Sigma^{-1})_{i,j}=0$ if there are no pure collider routes in $G$ between $i$ and $j$.
\end{lemma}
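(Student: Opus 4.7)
The plan is to expand $\Sigma^{-1}$ as a sum of per-chain-component contributions and then argue by case analysis on the position of the indices $i, j$ relative to each chain component that each contribution vanishes unless $i$ and $j$ are joined by a pure collider route through that component. Combining Equation \ref{eq:recursion1} with the Gaussian form of each factor and collecting the resulting quadratic terms gives
\[
\Sigma^{-1} = \sum_{K \in Cc(G)} M_K,
\]
with $M_K$ supported on $K \cup Pa(K)$ and blocks $(M_K)_{K,K} = \Lambda_K^{-1}$, $(M_K)_{K,Pa(K)} = -\Lambda_K^{-1}\beta_K$, and $(M_K)_{Pa(K),Pa(K)} = \beta_K^T\Lambda_K^{-1}\beta_K$. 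Writing $\Omega^{(K)}$ for the precision matrix of $p(K, Pa(K))$, Equation \ref{eq:beta} gives $-\Lambda_K^{-1}\beta_K = \Omega^{(K)}_{K,Pa(K)}$. It then suffices to prove that $(M_K)_{i,j} \neq 0$ forces a pure collider route through $K$ between $i$ and $j$.

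If $i, j \in K$, then $(M_K)_{i,j} = (\Lambda_K^{-1})_{i,j}$, which by Equation \ref{eq:zeron} vanishes for $i \neq j$ unless $i - j$ is in $G$; the edge $i - j$ is itself a pure collider route. If $i \in K$ and $j \in Pa(K)$, then $(M_K)_{i,j} = \Omega^{(K)}_{i,j}$: for $j \in Fa(K)$, Equation \ref{eq:zerof} makes this vanish unless $j \in Fa(i)$, yielding the pure collider route $j \ra i$; for $j \in Mo(K)$, expanding $\Omega^{(K)}_{K,Pa(K)} = -\Lambda_K^{-1}\beta_K$ and invoking Equation \ref{eq:zerom} shows that a nonzero value arises only from some $c \in K$ with $j \dra c$ in $G$ and $c \in \{i\} \cup Ne(i)$, yielding the pure collider route $j \dra i$ or $j \dra c - i$.

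The delicate case is $i, j \in Pa(K)$. Using $\beta_K = -\Lambda_K\,\Omega^{(K)}_{K,Pa(K)}$ together with $\Lambda_K\Lambda_K^{-1} = I$, the expansion $(M_K)_{i,j} = \sum_{m, n \in K} \Omega^{(K)}_{i,m}(\Lambda_K)_{m,n}\Omega^{(K)}_{n,j}$ collapses telescopically to
\[
(M_K)_{i,j} = -\sum_{n \in K} (\beta_K)_{n,i}\,\Omega^{(K)}_{n,j},
\]
which makes visible the cancellations induced in the original double sum by the zero pattern of $\Lambda_K^{-1}$ from Equation \ref{eq:zeron}. Sub-dividing according to whether each of $i, j$ lies in $Mo(K)$ or in $Fa(K)$ and applying Equations \ref{eq:zerom}--\ref{eq:zerof} then shows that a nonzero value demands nodes in $K$ (and, when both indices are in $Fa(K)$, an undirected path in $G_K$, which is automatic by the connectedness of $K$) realizing exactly one of the pure collider route patterns $i \dra n \dla j$, $i \dra n - c \dla j$, $i \dra n \la j$, or $i \ra n - \cdots - c \la j$ from the definition.

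Summing over $K$, the absence of a pure collider route in $G$ between $i$ and $j$ forces $(M_K)_{i,j} = 0$ for every $K \in Cc(G)$, and therefore $(\Sigma^{-1})_{i,j} = 0$. The principal obstacle is the $i, j \in Pa(K)$ case: a literal expansion of $\beta_K^T\Lambda_K^{-1}\beta_K$ presents many apparently nonzero summands even when no pure collider route exists, and it is the telescoping identity above that allows the zero restrictions on $\beta_K$ and on $\Lambda_K^{-1}$ to propagate cleanly through the sum and produce the required structural cancellations.
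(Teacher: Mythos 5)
Your decomposition $\Sigma^{-1}=\sum_{K}M_K$ is correct and your case analysis lines up one-to-one with the paper's, but the organization is genuinely different: the paper proceeds by induction on the number of chain components, peeling off the last component $L$ and applying the $2\times 2$ block formula for the precision of $(U,L)$ with $U=K_1\cup\cdots\cup K_{r-1}$, so that your three cases ($i,j\in K$; $i\in K$, $j\in Pa(K)$; $i,j\in Pa(K)$) correspond to its cases $i,j\in L$; $i\in U,j\in L$; $i,j\in U$, with the earlier components absorbed into the inductive hypothesis rather than made explicit as separate summands $M_{K'}$. What your route buys is twofold: each $M_K$ involves only one component's parameters, and --- more importantly --- by rewriting $-\Lambda_K^{-1}\beta_K=\Omega_{K,Pa(K)}$ you let Equation \ref{eq:zerof} kill the father entries wholesale instead of arguing summand by summand. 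This matters: the paper expands $\beta_L^T\Lambda_L^{-1}\beta_L$ and $\beta_L^T\Lambda_L^{-1}$ term by term and claims each term vanishes, but when $i$ is a father the individual summands $(\beta_L)_{r,i}(\Lambda_L^{-1})_{r,j}$ are generically nonzero (e.g.\ $i\ra c - d$ with $j=d$) and only the \emph{sum} vanishes, via exactly the cancellation your telescoping identity captures. So on this point your write-up is tighter than the paper's.

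There is, however, one sub-case where your argument as written does not close. Your single telescoping $(M_K)_{i,j}=-\sum_{n}(\beta_K)_{n,i}\Omega^{(K)}_{n,j}$ handles $(i,j)$ in $Mo\times Mo$, $Mo\times Fa$ and $Fa\times Fa$, but for $i\in Fa(K)$, $j\in Mo(K)$ it fails term by term: $(\beta_K)_{n,i}$ is generically nonzero for \emph{every} $n\in K$ (a father reaches all of the connected component $K$ through undirected paths), and $\Omega^{(K)}_{n,j}$ for a mother $j$ is not controlled by Equation \ref{eq:zerof}, so the individual terms only exhibit routes $i\ra c-\cdots-n\dla j$, which are not pure collider routes when $c\neq n$, even though the sum does vanish (e.g.\ $i\ra c$, $c-d$, $j\dra d$). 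You need the mirror-image identity $(M_K)_{i,j}=-\sum_{m}\Omega^{(K)}_{i,m}(\beta_K)_{m,j}$ --- equivalently, apply your formula to $(M_K)_{j,i}$ and use the symmetry of $M_K$ --- which places Equation \ref{eq:zerof} on the father $i$ and Equation \ref{eq:zerom} on the mother $j$ and yields the pure collider pattern $i\ra m\dla j$. The rule is that the $\Omega$ factor must sit on the father and the $\beta$ factor on the mother; with that one-line fix your list of realizable patterns is exactly right and the proof is complete.
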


\begin{proof}
We prove the result by induction on the number of chain components $n$. The result clearly holds for $n=1$, because the only possible pure collider route in $G$ between $i$ and $j$ is of the form $i - j$, and Equation \ref{eq:zeron} implies that $(\Sigma^{-1})_{i,j}=0$ if $i - j$ is not in $G$. Assume as induction hypothesis that the result holds for all $n<r$. We now prove it for $n=r$. Let $U= K_1 \cup \cdots \cup K_{r-1}$ and $L=K_r$. By the induction hypothesis, $U \sim \mathcal{N}(0, \Lambda_U)$. Moreover, let $L \sim \mathcal{N}(\beta_L Pa(L), \Lambda_L)$. \citet[Section 2.3.3]{Bishop2006} shows that $(U, L)^T \sim \mathcal{N}(0, \Sigma)$ where
\begin{equation}\label{eq:zero}
\Sigma^{-1} = 
\begin{pmatrix} 
\Lambda_{U}^{-1}+\beta_{L}^T \Lambda_{L}^{-1} \beta_{L} & -\beta_{L}^T \Lambda_{L}^{-1}\\
-\Lambda_{L}^{-1} \beta_{L} & \Lambda_{L}^{-1}  
\end{pmatrix}.
\end{equation}

Assume that $i,j \in U$. Then, Equation \ref{eq:zero} implies that $(\Sigma^{-1})_{i,j}=0$ if $(\Lambda_U^{-1})_{i,j}=0$ and $(\beta_{L}^T \Lambda_{L}^{-1} \beta_{L})_{i,j}=0$. We show below that the first (respectively second) condition holds if there are no pure collider routes in $G$ between $i$ and $j$ through nodes in $U$ (respectively $L$).

\begin{itemize}
\item By the induction hypothesis, $(\Lambda_U^{-1})_{i,j}=0$ if there are no pure collider routes in $G$ between $i$ and $j$ through nodes in $U$.

\item Note that $( \beta_{L}^T \Lambda_{L}^{-1} \beta_{L} )_{i,j}=\sum_r \sum_s (\beta_L)_{r,i} (\Lambda_{L}^{-1})_{r,s} (\beta_L)_{s,j}=0$ if $(\beta_L)_{r,i}$ $(\Lambda_{L}^{-1})_{r,s} (\beta_L)_{s,j}=0$ for all $r$ and $s$. For $r=s$, $(\beta_L)_{r,i} (\Lambda_{L}^{-1})_{r,s} (\beta_L)_{s,j}=0$ if $(\beta_L)_{r,i}=0$ or $(\beta_L)_{s,j}=0$. For $r \neq s$, $(\beta_L)_{r,i} (\Lambda_{L}^{-1})_{r,s} (\beta_L)_{s,j}=0$ if $(\beta_L)_{r,i}=0$ or $(\Lambda_{L}^{-1})_{r,s}=0$ or $(\beta_L)_{s,j}=0$. Moreover, as shown in Section \ref{sec:ucgs}, $(\beta_L)_{r,i}=0$ if neither $i \dra r$ nor $i \ra c - \cdots - r$ is in $G$. Likewise, $(\beta_L)_{s,j}=0$ if neither $j \dra s$ nor $j \ra d - \cdots - s$ is in $G$. Finally, $(\Lambda_{L}^{-1})_{r,s}=0$ if $r - s$ is not in $G$. These conditions rule out the existence of pure collider routes in $G$ between $i$ and $j$ through nodes in $L$.
\end{itemize}

Now, assume that $i \in U$ and $j \in L$. Then, Equation \ref{eq:zero} implies that $(\Sigma^{-1})_{i,j}=0$ if $(\beta_{L}^T \Lambda_{L}^{-1})_{i,j}=0$ if $\sum_r (\beta_L)_{r,i} (\Lambda_{L}^{-1})_{r,j}=0$ if $(\beta_L)_{r,i} (\Lambda_{L}^{-1})_{r,j}=0$ for all $r$. For $j=r$, $(\beta_L)_{r,i} (\Lambda_{L}^{-1})_{r,j}=0$ if $(\beta_L)_{r,i}=0$ if neither $i \dra r$ nor $i \ra c - \cdots - r$ is in $G$. For $j \neq r$, $(\beta_L)_{r,i} (\Lambda_{L}^{-1})_{r,j}=0$ if $(\beta_L)_{r,i}=0$ or $(\Lambda_{L}^{-1})_{r,j}=0$ if neither $i \dra r$ nor $i \ra c - \cdots - r$ is in $G$, or $r - j$ is not in $G$. Either case holds if there are no pure collider routes in $G$ between $i$ and $j$.

Finally, assume that $i,j \in L$. Then, Equation \ref{eq:zero} implies that $(\Sigma^{-1})_{i,j}=0$ if $(\Lambda_L^{-1})_{i,j}=0$ if $i - j$ is not in $G$, i.e. if there are no pure collider routes in $G$ between $i$ and $j$.
\end{proof}

\begin{proof}[Proof of Theorem \ref{the:zg}]
We prove first the if part. For any chain component $K \in Cc(G)$, clearly $K \ci_G Nd(K) \setminus K \setminus Pa(K) | Pa(K)$ and thus $K \ci_p Nd(K) \setminus K \setminus Pa(K) | Pa(K)$, which implies Equation \ref{eq:recursion1}. For any vertex $i \in K$, clearly $i \ci_G Mo(K) \setminus Mo(i) | Fa(K) \cup Mo(i)$ and thus $i \ci_p Mo(K) \setminus Mo(i) | Fa(K) \cup Mo(i)$, which implies Equation \ref{eq:zerom}. For any vertices $i \in K$ and $j \in Fa(K) \setminus Fa(i)$, clearly $i \ci_G j | K \setminus \{i\} \cup Pa(K) \setminus \{j\}$ and thus $i \ci_G j | K \setminus \{i\} \cup Pa(K) \setminus \{j\}$, which implies Equation \ref{eq:zerof}. Finally, for any non-adjacent vertices $i,j \in K$, clearly $i \ci_G j | K \setminus \{i,j\} \cup Pa(K)$ and thus $i \ci_p j | K \setminus \{i,j\} \cup Pa(K)$, which implies Equation \ref{eq:zeron}.

We now prove the only if part. Consider three disjoint subsets $X$, $Y$ and $Z$ of $V$ such that $X \ci_G Y | Z$. Let $K_1, \ldots, K_n$ denote the topologically sorted chain components of $G^{X \cup Y \cup Z}$. Note that $G_{K_1 \cup \cdots \cup K_n}$ and $G^{X \cup Y \cup Z}$ only differ in that the latter may not have all the edges $\dra$ in the former. Note also that $K_1, \ldots, K_n$ are chain components of $G$, too. Consider a topological ordering of the chain components of $G$, and let $Q_1, \ldots, Q_m$ denote the components that precede $K_n$ in the ordering, besides $K_1, \ldots, K_{n-1}$. Note that the edges in $G$ from any $Q_i$ to any $K_j$ must be of the type $\dra$ because, otherwise, $Q_i$ would be a component of $G^{X \cup Y \cup Z}$. Therefore, $G_{Q_1 \cup \cdots \cup Q_m \cup K_1 \cup \cdots \cup K_n}$ and $G_{Q_1 \cup \cdots \cup Q_m} \cup G^{X \cup Y \cup Z}$ only differ in that the latter may not have all the edges $\dra$ in the former. In other words, the latter may impose additional zero restrictions on the elements of some $\beta_{K_j}$ corresponding to the mothers. Consider adding such additional restrictions to the marginal distribution $p(Q_1, \ldots, Q_m, K_1, \ldots, K_n)$ obtained from $p$ via Equations \ref{eq:recursion1} and \ref{eq:recursion2}, i.e. consider setting the corresponding elements of $\beta_{K_j}$ to zero (recall that $\beta_{K_j}$ are such that the mean vector of $p(K_j | Pa(K_j))$ is a linear function of $Pa(K_j)$ with coefficients $\beta_{K_j}$). Call the resulting distribution $p'(Q_1, \ldots, Q_m, K_1, \ldots, K_n)$.

Finally, recall again that $G_{Q_1 \cup \cdots \cup Q_m \cup K_1 \cup \cdots \cup K_n}$ and $G_{Q_1 \cup \cdots \cup Q_m} \cup G^{X \cup Y \cup Z}$ only differ in that the latter may not have all the edges $\dra$ in the former. Note also that every node in $X \cup Y \cup Z$ has the same mothers in $G_{Q_1 \cup \cdots \cup Q_m \cup K_1 \cup \cdots \cup K_n}$ and $G_{Q_1 \cup \cdots \cup Q_m} \cup G^{X \cup Y \cup Z}$. Then, $p(X \cup Y \cup Z) = p'(X \cup Y \cup Z)$ by Lemma \ref{lem:marginal} and, thus, $X \ci_p Y | Z$ if and only if $X \ci_{p'} Y | Z$. Now, recall from Lemma \ref{lem:pure} that $X \ci_G Y | Z$ implies that there is no pure collider route in $G^{X \cup Y \cup Z}$ between any vertices $A' \in X'$ and $B' \in Y'$. Then, there is no such a route either in $G_{Q_1 \cup \cdots \cup Q_m} \cup G^{X \cup Y \cup Z}$, because this UCG has no edges between the nodes in $V(G_{Q_1 \cup \cdots \cup Q_m})$ and the nodes in $V(G^{X \cup Y \cup Z})$. This implies that $A' \ci_{p'} B' | X' \setminus \{A'\} \cup Y' \setminus \{B'\} \cup Z \cup Q_1 \cup \cdots \cup Q_m$ by Lemma \ref{lem:zero} and \citet[Proposition 5.2]{Lauritzen1996}, which implies $X' \ci_{p'} Y' | Z \cup Q_1 \cup \cdots \cup Q_m$ by repeated application of intersection. Moreover, $X' \ci_{p'} Q_1 \cup \cdots \cup Q_m | Z$ because $G_{Q_1 \cup \cdots \cup Q_m} \cup G^{X \cup Y \cup Z}$ has no edges between the nodes in $V(G_{Q_1 \cup \cdots \cup Q_m})$ and the nodes in $V(G^{X \cup Y \cup Z})$. This implies $X \ci_{p'} Y | Z$ by contraction and decomposition which, as shown, implies $X \ci_p Y | Z$.
\end{proof}

\begin{proof}[Proof of Theorem \ref{the:JonesWestWright}]
We prove the result by induction on the number of chain components $n$. The result holds for $n=1$ by Equation \ref{eq:jones} \citep[Theorem 1]{JonesandWest2005}. Assume as induction hypothesis that the result holds for all $n<m$. We now prove it for $n=m$. Let $U = K_1 \cup \cdots \cup K_{m-1}$ and $L=K_m$. Let $U \sim \mathcal{N}(0, \Lambda_U)$ and $L \sim \mathcal{N}(\beta_L U, \Lambda_L)$ and, thus, $(U, L)^T \sim \mathcal{N}(0, \Sigma)$. \citet[Section 2.3.3]{Bishop2006} proves that
\[
\Sigma = 
\begin{pmatrix} 
\Lambda_U & \Lambda_U \beta_L^T\\
\beta_L \Lambda_U & \Lambda_L + \beta_L \Lambda_U \beta_L^T  
\end{pmatrix}.
\]

Now, note that
\[
(\beta_L \Lambda_U)_{i,j}= \sum_k (\beta_L)_{i,k} (\Lambda_U)_{k,j}.
\]
By the induction hypothesis, $(\Lambda_U)_{k,j}$ can be written as a sum of products of weights over the edges of the open paths between $k$ and $j$ in $G$. Moreover, as discussed in Section \ref{sec:ucgs} in relation to Equation \ref{eq:jones}, $(\beta_L)_{i,k}$ can be written as a sum of products of weights over the edges of the paths from $k$ to $i$ through nodes in $L$. Since the latter paths start all with a directed edge out of $k$, the previous observations together imply the desired result.

Finally,
\[
(\Lambda_L + \beta_L \Lambda_U \beta_L^T )_{i,j}= (\Lambda_L)_{i,j} + \sum_k \sum_l (\beta_L)_{i,k} (\Lambda_U)_{k,l} (\beta_L)_{j,l}.
\]
As discussed in Section \ref{sec:ucgs}, $(\Lambda_L)_{i,j}$ can be written as a sum of products of weights over the edges of the paths between $i$ and $j$ through nodes in $L$. By the induction hypothesis, $(\Lambda_U)_{k,l}$ can be written as a sum of products of weights over the edges of the open paths between $k$ and $l$ in $G$. Again, as discussed in Section \ref{sec:ucgs}, $(\beta_L)_{i,k}$ (respectively, $(\beta_L)_{j,l}$) can be written as a sum of products of weights over the edges of the paths from $k$ to $i$ (respectively, from $l$ to $j$) through nodes in $L$. Since the latter paths start all with a directed edge out of $k$ (respectively, out of $l$), the previous observations together imply the desired result.
\end{proof}

\begin{proof}[Proof of Theorem \ref{the:zb}]
We prove first the if part. The property B1 together with weak union and composition imply $K \ci_p Nd(K) \setminus K \setminus Pa(K) | Pa(K)$ and thus Equation \ref{eq:recursion1}. Moreover, B1 implies $i \ci_p Mo(K) \setminus Mo(i) | Fa(K) \cup Mo(i)$ by decomposition and, thus, Equation \ref{eq:zerom}. Moreover, B2 implies $i \ci_p Fa(K) \setminus Fa(i) | K \setminus \{i\} \cup Fa(i) \cup Mo(K)$ by decomposition, which implies $i \ci_p j | K \setminus \{i\} \cup Pa(K) \setminus \{j\}$ for all $j \in Fa(K) \setminus Fa(i)$ by weak union, which implies Equation \ref{eq:zerof}. Finally, B3 implies Equation \ref{eq:zeron}.

We now prove the only if part. Note that if $p$ satisfies Equations \ref{eq:recursion1} and \ref{eq:zerom}-\ref{eq:zeron} with respect to $G$, then it satisfies the global Markov property with respect to $G$ by Theorem \ref{the:zg}. Now, it is easy to verify that the block-recursive property holds. Specifically, the separations $i \ci_G Nd(K) \setminus K \setminus Fa(K) \setminus Mo(i) | Fa(K) \cup Mo(i)$ for all $i \in K$ with $K \in Cc(G)$ imply B1 by the global Markov property. Likewise, the separations $i \ci_G Nd(K) \setminus K \setminus Fa(i) \setminus Mo(K) | K \setminus \{i\} \cup Fa(i) \cup Mo(K)$ for all $i \in K$ with $K \in Cc(G)$ imply B2 by the global Markov property. Finally, the separations $i \ci_G j | K \setminus \{i,j\} \cup Pa(K)$ for all $i, j \in K$ with $K \in Cc(G)$ and such that $i - j$ is not in $G$ imply B3 by the global Markov property.
\end{proof}

\begin{proof}[Proof of Theorem \ref{the:pb}]
Note that $Nd(i)=Nd(K)$. Then, the properties P1 and P2 imply respectively B1 and B2 by repeated application of intersection. Similarly, P2 implies $i \ci_p Nd(K) \setminus \{i\} \setminus Pa(K) \setminus Ne(i) | Pa(K) \cup Ne(i)$ by repeated application of intersection, which implies $i \ci_p K \setminus \{i\} \setminus Ne(i) | Pa(K) \cup Ne(i)$ by decomposition, which implies B3 by weak union. Finally, the property B1 implies P1 by weak union.  Likewise, B2 implies P2 by weak union if $j \notin K$. Assume now that $j \in K$ and note that B2 implies $i \ci_p Nd(K) \setminus K \setminus Pa(K) | K \setminus \{i\} \cup Pa(K)$ by weak union. Note also that B3 implies that $p(K|Pa(K))$ satisfies the global Markov property with respect to $G_K$ by \citet[Theorem 3.7]{Lauritzen1996} and, thus, $i \ci_p K \setminus \{i\} \setminus Ne(i) | Pa(K) \cup Ne(i)$. Then, $i \ci_p Nd(K) \setminus \{i\} \setminus Pa(K) \setminus Ne(i) | Pa(K) \cup Ne(i)$ by contraction, which implies P2 by weak union.
\end{proof}

\begin{proof}[Proof of Theorem \ref{the:lp}]
The properties L1 and L2 imply respectively P1 and P2 by weak union. Note also that the pairwise Markov property implies the global property by Theorems \ref{the:zg}, \ref{the:zb} and \ref{the:pb}. Now, it is easy to verify that the local property holds. Specifically, the separations $i \ci_G Nd(i) \setminus K \setminus Fa(K) \setminus Mo(i) | Fa(K) \cup Mo(i)$ for all $i \in K$ with $K \in Cc(G)$ imply L1 by the global Markov property. Likewise, the separations $i \ci_G Nd(i) \setminus \{i\} \setminus Pa(i) \setminus Ne(i) \setminus Mo(Ne(i)) | Pa(i) \cup Ne(i) \cup Mo(Ne(i))$ for all $i \in K$ with $K \in Cc(G)$ imply L2 by the global Markov property.
\end{proof}

\begin{proof}[Proof of Theorem \ref{the:soundrules}]
Recall from the main text that $p(V \setminus X | do(X))$ satisfies the global Markov property with respect to $G_{V \setminus X}$. Then, it also satisfies the global Markov property with respect to $(G_{V \setminus X})'$, because this UCG is a supergraph of $G_{V \setminus X}$. Then, rule 1 holds. Actually, $G_{V \setminus X}$ and $(G_{V \setminus X})'$ represent the same separations over $V \setminus X$, because all the variables $F_{V_i}$ are observed (i.e., $F_{V_i}=a$ or $F_{V_i}=idle$) and, thus, they do not open new routes.

For the proof of rule 2, assume that $Z$ and $Y$ are singletons. The generalization to sets of variables is trivial. First, assume that $F_Z \dra Z$ is in $(G_{V \setminus X})'$. The antecedent of rule 2 implies that all the $W$-open routes in $(G_{V \setminus X})'$ from $Z$ to $Y$ start with an edge $Z \ra A$ or $Z \dra A$. Then, observing $Z=z$ and setting $Z=z$ produces the same effect on $Y$ and, thus, rule 2 holds. Second, assume that $F_Z \ra Z$ is in $(G_{V \setminus X})'$. The antecedent of rule 2 implies that all the $W$-open routes in $(G_{V \setminus X})'$ from $Z$ to $Y$ start with an edge $Z \ra B$ or $Z \dra B$ or with a subroute $Z - \cdots - Y$ or $Z - \cdots - A \ra B$ or $Z - \cdots - A \dra B$ or $Z - \cdots - A \dla B$. Then, observing $Z=z$ and setting $Z=z$ produces the same effect on $Y$ and, thus, rule 2 holds. The result may not be immediate when the route from $Z$ to $Y$ starts with a subroute $Z - \cdots - A \dla B$. The result follows from the fact that $A \in W$ and the interventional causal mechanism is $F_Z \ra Z$, i.e. the intervention interferes with $A$.

For the proof of rule 3 holds, assume that $Z$ and $Y$ are singletons. The generalization to sets of variables is trivial. First, assume that $F_Z \ra Z$ is in $(G_{V \setminus X})'$. The antecedent of rule 3 implies that all the $W$-open routes in $(G_{V \setminus X})'$ from $Z$ to $Y$ start with an edge $Z \la A$ or $Z \dla A$ or with a subroute $Z - B_1 - \cdots - B_n \la A$ such that $B_i \notin W$ for all $i$. Then, setting $Z=z$ has no effect on $Y$ and, thus, rule 3 holds. Second, assume that $F_Z \dra Z$ is in $(G_{V \setminus X})'$. The antecedent of rule 3 implies that all the $W$-open routes in $(G_{V \setminus X})'$ from $Z$ to $Y$ start with an edge $Z \la A$ or $Z \dla A$ or $Z - A$. Then, setting $Z=z$ has no effect on $Y$ and, thus, rule 3 holds. The result may not be immediate when the route from $Z$ to $Y$ starts with an edge $Z - A$. The result follows from the fact that the interventional causal mechanism is $F_Z \dra Z$, i.e. the intervention does not interfere with $A$.
\end{proof}

\begin{proof}[Proof of Corollary \ref{cor:lwf}]
Let $K_1, \ldots, K_n$ denote the topologically sorted chain components of $G$. Note that $K_i \setminus X \ci_{G_{V \setminus X}} K_1 \cup \cdots \cup K_{i-1} \setminus Pa(K_i) \setminus X | Pa(K_i) \setminus X$ for all $i$. Moreover, recall from the main text that $p(V \setminus X | do(X))$ satisfies the global Markov property with respect to $G_{V \setminus X}$. Then, we have that
\[
p(V \setminus X | do(X))= \prod_{K \in Cc(G)} p(K \setminus X |Pa(K) \setminus X, do(X)).
\]
Note also that $K \setminus X \ci_{(G_{V \setminus [ X \setminus Pa(K) ]})'} F_{Pa(K) \cap X} | Pa(K)$. Then, rule 2 implies that
\[
p(V \setminus X | do(X))= \prod_{K \in Cc(G)} p(K \setminus X |Pa(K), do(X \setminus Pa(K))).
\]
Note also that $K \setminus X \ci_{(G_{V \setminus [ X \setminus Pa(K) \setminus K ]})'} F_{K \cap X} | Pa(K) \cup (K \cap X)$ by the assumption that the interventional causal mechanism of each variable $A \in K \cap X$ is modeled as $F_A \ra A$, and the fact that $F_B$ is observed (i.e., $F_B=idle$) for each variable $B \in K \setminus X$. Then, rule 2 implies that
\[
p(V \setminus X | do(X))= \prod_{K \in Cc(G)} p(K \setminus X |Pa(K) \cup (K \cap X), do(X \setminus Pa(K) \setminus K)).
\]
Finally, note that $K \setminus X \ci_{G'} F_{X \setminus Pa(K) \setminus K} | Pa(K) \cup (K \cap X)$. Then, rule 3 implies that
\[
p(V \setminus X | do(X))= \prod_{K \in Cc(G)} p(K \setminus X |Pa(K) \cup (K \cap X)).
\]
\end{proof}

\begin{proof}[Proof of Lemma \ref{lem:rescaling}]
Let $\Lambda$ and $\overline{\Lambda}$ denote the error covariance matrices before and after the rescaling, respectively. Note that we only need to consider independences between singletons. In particular, $\epsilon_i$ is independent of $\epsilon_j$ conditioned on $\epsilon_L$ if and only if $(\Lambda_{ijL,ijL}^{-1})_{i,j}=0$ \citep[Proposition 5.2]{Lauritzen1996} and, thus, if and only if the $(j,i)$ minor of $\Lambda_{ijL,ijL}$ is zero, i.e. $det(M)=0$ where $M$ is the result of removing the row $j$ and column $i$ from $\Lambda_{ijL,ijL}$. Note that $det(M)=\sum_{\pi \in S} sign(\pi) \prod_k M_{k, \pi(k)}$ where $S$ denotes all the permutations over the number of rows or columns of $M$. Then, $det(\overline{M})=det(M) \prod_k 1/\lambda_k^2$ and, thus, $det(M)=0$ if and only $det(\overline{M})=0$.

A matrix is positive definite if and only if the determinants of all its upper-left submatrices are positive. Therefore, it follows from the previous paragraph that $\Lambda$ and $\overline{\Lambda}$ are both positive definite or none.
\end{proof}

\begin{proof}[Proof of Theorem \ref{the:main}]
Under the faithfulness assumption, we can identify the Markov equivalence class of $G$ from $p$ by, for instance, running the learning algorithm developed by \citet{Studeny1997b} for LWF CGs and by \citet{Penna2012} for AMP CGs. Assume to the contrary that there is a second CG $G'$ in the equivalence class that generates $p$ via Equation \ref{eq:recursion2c}. Let $N$ and $N'$ denote the nodes without parents in $G$ and $G'$. Assume that $N \neq N'$. Then, there exists some node $L \in V$ that is in $N$ but not in $N'$, or vice versa. Assume the latter without loss of generality. Then, there is an edge $Y \ra L$ that is in $G$ but not in $G'$. Let $Q$ denote all the parents of $L$ in $G$ except $Y$. Then, we have from $G$ that
\[
L=\beta_Q Q + \beta_Y Y + \epsilon_L
\]
by Equation \ref{eq:recursion2c}. Note that $\beta_Y \neq 0$ by the faithfulness assumption. Define $L^*=L|_{Q=q}$ and $Y^*=Y|_{Q=q}$ in distribution. Since $\epsilon_L \ci_p Q \cup \{Y\}$ by construction, we have from $G$ that
\[
L^*=\beta_Q q + \beta_Y Y^* + \epsilon_L
\]
in distribution \citep[Lemma 2]{Petersetal.2011} and, thus, $var(L^*) > var(\epsilon_L) = \lambda^2$. However, we have from $G'$ that $var(L^*) \leq \lambda^2$ \citep[Lemma A1]{PetersandBuhlmann2014}. This is a contradiction and, thus, $N=N'$. Note that the undirected edges between the nodes in $N$ must be the same in $G$ and $G'$, because Markov equivalent CGs have the same adjacencies as shown by \citet[Theorem 5.6]{Frydenberg1990b} for LWF CGs and by \citet[Theorem 5]{Anderssonetal.2001} for AMP CGs. For the same reason, the directed edges from $N$ to $V \setminus N$ must be the same in $G$ and $G'$. Since $G_{V \setminus N}$ and $G_{V \setminus N}'$ generate $p(V \setminus N |N=n)$ via Equation \ref{eq:recursion2c}, we can repeat the reasoning above replacing $G$, $G'$ and $p$ by $G_{V \setminus N}$, $G_{V \setminus N}'$ and $p(V \setminus N |N=n)$. This allows us to conclude that the nodes with no parents in $G_{V \setminus N}$ and their corresponding edges coincide with those in $G_{V \setminus N}'$. By continuing with this process, we can conclude that $G=G'$.
\end{proof}

\section*{Appendix B: Covariance Decomposition}\label{sec:appendixb}

It is known from path analysis in Gaussian acyclic directed mixed graphs that the covariance $\Sigma_{i,l}$ can be expressed as the sum for every open path between $l$ and $i$ of the product of path coefficients and error covariances for the edges in the path \citep{Wright1921,Pearl2009}. The path coefficient $\alpha_{B,A}$ corresponding to an edge $A \ra B$ represents the change in $B$ induced by raising $A$ one unit while keeping all the other variables constant. Note then that the path coefficients carry causal information. For directed and acyclic graphs, $\alpha_{B,A}$ is always identifiable and coincides with the partial regression coefficient $\beta_{B,A|Z(B,A)}$, where $Z(B,A)$ is a set of nodes that blocks all paths from $A$ to $B$ except $A \ra B$. For directed and acyclic graphs, we have then that
\[
\Sigma_{i,l}= \sum_{\rho \in \pi_{i,l}} \Sigma_{\rho_1,\rho_1} \prod_{n=2}^{|\rho|} \alpha_{\rho_n,\rho_{n-1}} = \sum_{\rho \in \pi_{i,l}} \Sigma_{\rho_1,\rho_1} \prod_{n=2}^{|\rho|} \beta_{\rho_n,\rho_{n-1} | Z(\rho_n,\rho_{n-1})}.
\]

Despite being symmetric, undirected edges represent interference by contagion and, thus, they have some features of causal relationships. Then, one may wonder whether it is possible to derive an expression similar to the one above for undirected graphs. \citet[p. 782]{JonesandWest2005} hint this possibility. The next lemma proves it formally: $\Sigma_{i,l}$ can indeed be expressed as the sum for every open path between $l$ and $i$ of the product of regression coefficients, error covariances and inflation factors for the edges in the path.

\begin{lemma}\label{lem:if}
Consider a Gaussian distribution over a set of random variables $K$. Let $\Sigma$ denote the covariance matrix of the distribution. Assume that the distribution satisfies the global Markov property with respect to an undirected graph $G$. Then,
\[
\Sigma_{i,l}= \sum_{\rho \in \pi_{i,l}} \Sigma_{\rho_1,\rho_1} \prod_{n=2}^{|\rho|} \beta_{\rho_n,\rho_{n-1} | K \setminus \{\rho_n,\rho_{n-1}\}} \frac{\Sigma_{\rho_n,\rho_n|\rho_1, \ldots, \rho_{n-1}}}{\Sigma_{\rho_n,\rho_n|K \setminus \{ \rho_n \}}}.
\]
\end{lemma}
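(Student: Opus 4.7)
The strategy is to start from the Jones and West formula in Equation \ref{eq:jones}, applied with $K$ equal to the full variable set (so that $\Omega_{K,K}^{-1}=\Sigma$ and $\Sigma_{i,l}$ is already written as a sum over the paths $\pi_{i,l}$ in $G$), and to rewrite each summand term by term so that it matches the summand on the right-hand side of the lemma. All of the work is routine Gaussian algebra; the role of the global Markov property is only to guarantee that $\Omega_{A,B}=0$ whenever $A$ and $B$ are not adjacent in $G$, which is what makes $\pi_{i,l}$ the correct index set for the nonzero terms in Equation \ref{eq:jones}.

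First I would record three standard identities that hold for a Gaussian distribution on $K$ with $\Omega=\Sigma^{-1}$. Applying Equations \ref{eq:beta} and \ref{eq:lambda} to the regression of a single variable $B$ on $K\setminus\{B\}$ yields
\[
\beta_{B,A | K\setminus\{A,B\}} = -\frac{\Omega_{B,A}}{\Omega_{B,B}}, \qquad \Sigma_{B,B | K\setminus\{B\}} = \frac{1}{\Omega_{B,B}}.
\]
Substituting these into the right-hand side of the lemma, the summand indexed by $\rho$ becomes
\[
(-1)^{|\rho|-1}\, \Sigma_{\rho_1,\rho_1} \prod_{n=2}^{|\rho|} \Omega_{\rho_n,\rho_{n-1}} \prod_{n=2}^{|\rho|} \Sigma_{\rho_n,\rho_n|\rho_1,\ldots,\rho_{n-1}},
\]
because the $|\rho|-1$ factors of $\Omega_{\rho_n,\rho_n}$ introduced by the $\beta$'s cancel exactly the $|\rho|-1$ factors of $\Omega_{\rho_n,\rho_n}$ introduced by $1/\Sigma_{\rho_n,\rho_n|K\setminus\{\rho_n\}}$.

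Next I would reconcile this with the summand in Equation \ref{eq:jones}, namely $(-1)^{|\rho|+1}\,(|\Omega_{\setminus\rho}|/|\Omega|)\,\prod_{n=1}^{|\rho|-1}\Omega_{\rho_n,\rho_{n+1}}$. The signs $(-1)^{|\rho|+1}$ and $(-1)^{|\rho|-1}$ agree, and by symmetry of $\Omega$ the two edge products coincide, so what remains is the purely determinantal identity
\[
\frac{|\Omega_{\setminus\rho}|}{|\Omega|} = \Sigma_{\rho_1,\rho_1}\prod_{n=2}^{|\rho|} \Sigma_{\rho_n,\rho_n|\rho_1,\ldots,\rho_{n-1}}.
\]
To prove this I would combine two standard Gaussian facts. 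First, the Schur complement identity tells us that $\Omega_{\setminus\rho}$ is the precision matrix of $K\setminus\rho$ given $\rho$, so $|\Omega_{\setminus\rho}|=1/|\Sigma_{K\setminus\rho|\rho}|$, while $|\Omega|=1/|\Sigma|$. Second, the chain rule for Gaussian determinants, applied after ordering the variables as $\rho_1,\ldots,\rho_{|\rho|}$ followed by $K\setminus\rho$, gives $|\Sigma| = \Sigma_{\rho_1,\rho_1}\,\prod_{n=2}^{|\rho|}\Sigma_{\rho_n,\rho_n|\rho_1,\ldots,\rho_{n-1}}\cdot|\Sigma_{K\setminus\rho|\rho}|$. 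Dividing the two expressions yields exactly the displayed identity.

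The main obstacle I expect is not the combinatorics, which is already handled by Jones and West, but the careful bookkeeping of conditioning sets in the determinantal identity above: one must package the Schur complement identity together with the Gaussian chain rule so that the conditioning sets on the conditional variances line up with those prescribed by the lemma. Once that identity is in place, the rest is mechanical matching of the three blocks of factors in the lemma with the three ingredients $(-1)^{|\rho|+1}$, $|\Omega_{\setminus\rho}|/|\Omega|$, and $\prod \Omega_{\rho_n,\rho_{n+1}}$ in Equation \ref{eq:jones}, and the equality of the summands, hence of the sums, follows.
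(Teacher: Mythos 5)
Your proposal is correct and follows essentially the same route as the paper's proof: both reduce the claim to the Jones--West expansion via the identities $\beta_{B,A\mid K\setminus\{A,B\}}=-\Omega_{B,A}/\Omega_{B,B}$ and $\Sigma_{B,B\mid K\setminus\{B\}}=1/\Omega_{B,B}$, together with the Schur-complement determinant identity and the sequential factorization $|\Sigma_{\rho,\rho}|=\Sigma_{\rho_1,\rho_1}\prod_{n=2}^{|\rho|}\Sigma_{\rho_n,\rho_n\mid\rho_1,\ldots,\rho_{n-1}}$. The only difference is organizational (you work backward from the lemma's right-hand side and isolate the determinantal identity, while the paper transforms the Jones--West summand forward), so the two arguments are the same in substance.
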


\begin{proof}
Let $\overline{\rho}=K \setminus \rho$ for any $\rho \subseteq K$. Then,
\begin{align*}
\Sigma_{i,l}&= \sum_{\rho \in \pi_{i,l}} (-1)^{|\rho|+1} \frac{|\Omega_{\overline{\rho},\overline{\rho}}|}{|\Omega|} \prod_{n=2}^{|\rho|} \Omega_{\rho_{n-1}, \rho_n}\\
&= \sum_{\rho \in \pi_{i,l}} (-1)^{|\rho|+1} \frac{|\Omega_{\overline{\rho},\overline{\rho}}|}{|\Omega|} \prod_{n=2}^{|\rho|} \Omega_{\rho_{n-1}, \rho_n} \frac{\Omega_{\rho_n, \rho_n}}{\Omega_{\rho_n, \rho_n}}\\
&= \sum_{\rho \in \pi_{i,l}} \frac{|\Omega_{\overline{\rho},\overline{\rho}}|}{|\Omega|} \prod_{n=2}^{|\rho|} \beta_{\rho_n,\rho_{n-1} | K \setminus \{\rho_n,\rho_{n-1}\}} \Omega_{\rho_n, \rho_n}
\end{align*}
where the first equality is due to \citet[Theorem 1]{JonesandWest2005}, and the third is due to \citet[p. 130]{Lauritzen1996}. Then,
\begin{align*}
\Sigma_{i,l}&= \sum_{\rho \in \pi_{i,l}} \frac{|\Omega_{\overline{\rho},\overline{\rho}}|}{|\Omega_{\overline{\rho},\overline{\rho}}| |\Omega_{\rho,\rho|\overline{\rho}}|} \prod_{n=2}^{|\rho|} \beta_{\rho_n,\rho_{n-1} | K \setminus \{\rho_n,\rho_{n-1}\}} \Omega_{\rho_n, \rho_n}\\
&= \sum_{\rho \in \pi_{i,l}} |\Omega_{\rho,\rho|\overline{\rho}}^{-1}| \prod_{n=2}^{|\rho|} \beta_{\rho_n,\rho_{n-1} | K \setminus \{\rho_n,\rho_{n-1}\}} \Omega_{\rho_n, \rho_n}\\
&= \sum_{\rho \in \pi_{i,l}} |\Sigma_{\rho,\rho}| \prod_{n=2}^{|\rho|} \beta_{\rho_n,\rho_{n-1} | K \setminus \{\rho_n,\rho_{n-1}\}} \Omega_{\rho_n, \rho_n}
\end{align*}
by Schur's complement and inversion of a partitioned matrix. Then,
\begin{align*}
\Sigma_{i,l}&= \sum_{\rho \in \pi_{i,l}} |\Sigma_{\rho,\rho}| \prod_{n=2}^{|\rho|} \frac{\beta_{\rho_n,\rho_{n-1} | K \setminus \{\rho_n,\rho_{n-1}\}}}{\Sigma_{\rho_n,\rho_n|K \setminus \{ \rho_n \}}}\\
&= \sum_{\rho \in \pi_{i,l}} \Sigma_{\rho_1,\rho_1} \prod_{n=2}^{|\rho|} \beta_{\rho_n,\rho_{n-1} | K \setminus \{\rho_n,\rho_{n-1}\}} \frac{\Sigma_{\rho_n,\rho_n|\rho_1, \ldots, \rho_{n-1}}}{\Sigma_{\rho_n,\rho_n|K \setminus \{ \rho_n \}}}
\end{align*}
by \citet[pp. 129-130]{Lauritzen1996}.
\end{proof}

The variance ratio in the lemma is an inflation factor ($\geq 1$) that accounts for the overreduction of the variance of $\rho_n$ when conditioning on the rest of the variables in $K$. In other words, conditioning on the rest of the variables not only blocks all the rest of the paths from $\rho_{n-1}$ to $\rho_n$ but also overreduces the variance of $\rho_n$, which bias the causal effect of $\rho_{n-1}$ on $\rho_n$ represented by $\beta_{\rho_n,\rho_{n-1} | K \setminus \{\rho_n,\rho_{n-1}\}}$. The inflation factor remedies this. See \citet[Section 3]{Pearl2013} for some related phenomena (e.g., selection bias) in acyclic directed mixed graphs.

\bibliographystyle{plainnat}
\bibliography{UCGs22}

\begin{thebibliography}{29}
\providecommand{\natexlab}[1]{#1}
\providecommand{\url}[1]{\texttt{#1}}
\expandafter\ifx\csname urlstyle\endcsname\relax
  \providecommand{\doi}[1]{doi: #1}\else
  \providecommand{\doi}{doi: \begingroup \urlstyle{rm}\Url}\fi

\bibitem[Andersson et~al.(2001)Andersson, Madigan, and
  Perlman]{Anderssonetal.2001}
S.~A. Andersson, D.~Madigan, and M.~D. Perlman.
\newblock {Alternative Markov Properties for Chain Graphs}.
\newblock \emph{Scandinavian Journal of Statistics}, 28:\penalty0 33--85, 2001.

\bibitem[Bishop(2006)]{Bishop2006}
C.~M. Bishop.
\newblock \emph{{Pattern Recognition and Machine Learning}}.
\newblock Springer, 2006.

\bibitem[Drton(2009)]{Drton2009}
M.~Drton.
\newblock {Discrete Chain Graph Models}.
\newblock \emph{Bernoulli}, 15\penalty0 (3):\penalty0 736--753, 2009.

\bibitem[Drton and Eichler(2006)]{DrtonandEichler2006}
M.~Drton and M.~Eichler.
\newblock {Maximum Likelihood Estimation in Gaussian Chain Graph Models under
  the Alternative Markov Property}.
\newblock \emph{Scandinavian Journal of Statistics}, 33:\penalty0 247--257,
  2006.

\bibitem[Frydenberg(1990)]{Frydenberg1990b}
M.~Frydenberg.
\newblock {The Chain Graph Markov Property}.
\newblock \emph{Scandinavian Journal of Statistics}, 17:\penalty0 333--353,
  1990.

\bibitem[Jones and West(2005)]{JonesandWest2005}
B.~Jones and M.~West.
\newblock {Covariance Decomposition in Undirected Gaussian Graphical Models}.
\newblock \emph{Biometrika}, 92:\penalty0 779--786, 2005.

\bibitem[Lauritzen(1996)]{Lauritzen1996}
S.~L. Lauritzen.
\newblock \emph{Graphical Models}.
\newblock Oxford University Press, 1996.

\bibitem[Lauritzen and Richardson(2002)]{LauritzenandRichardson2002}
S.~L. Lauritzen and T.~S. Richardson.
\newblock {Chain Graph Models and Their Causal Interpretations}.
\newblock \emph{Journal of the Royal Statistical Society B}, 64:\penalty0
  321--348, 2002.

\bibitem[Lauritzen and Sadeghi(2018)]{LauritzenandSadeghi2018}
S.~L. Lauritzen and K.~Sadeghi.
\newblock {Unifying Markov Properties for Graphical Models}.
\newblock \emph{Annals of Statistics}, 46:\penalty0 2251--2278, 2018.

\bibitem[Levitz et~al.(2001)Levitz, Perlman, and Madigan]{Levitzetal.2001}
M.~Levitz, M.~D. Perlman, and D.~Madigan.
\newblock {Separation and Completeness Properties for AMP Chain Graph Markov
  Models}.
\newblock \emph{The Annals of Statistics}, 29:\penalty0 1751--1784, 2001.

\bibitem[Ogburn and VanderWeele(2014)]{OgburnandVanderWeele2014}
E.~L. Ogburn and T.~J. VanderWeele.
\newblock {Causal Diagrams for Interference}.
\newblock \emph{Statistical Science}, 29:\penalty0 559--578, 2014.

\bibitem[Ogburn et~al.(2018)Ogburn, Shpitser, and Lee]{Ogburnetal.2018}
E.~L. Ogburn, I.~Shpitser, and Y.~Lee.
\newblock {Causal Inference, Social Networks, and Chain Graphs}.
\newblock \emph{arXiv:1812.04990 [stat.ME]}, 2018.

\bibitem[Pe\~{n}a(2011)]{Penna2011}
J.~M. Pe\~{n}a.
\newblock {Faithfulness in Chain Graphs: The Gaussian Case}.
\newblock In \emph{Proceedings of the 14th International Conference on
  Artificial Intelligence and Statistics}, pages 588--599, 2011.

\bibitem[Pe\~{n}a(2012)]{Penna2012}
J.~M. Pe\~{n}a.
\newblock {Learning AMP Chain Graphs under Faithfulness}.
\newblock In \emph{Proceedings of the 6th European Workshop on Probabilistic
  Graphical Models}, pages 251--258, 2012.

\bibitem[Pe\~{n}a(2016)]{Penna2016}
J.~M. Pe\~{n}a.
\newblock {Alternative Markov and Causal Properties for Acyclic Directed Mixed
  Graphs}.
\newblock In \emph{Proceedings of the 32nd Conference on Uncertainty in
  Artificial Intelligence}, pages 577--586, 2016.

\bibitem[Pearl(2009)]{Pearl2009}
J.~Pearl.
\newblock \emph{Causality: Models, Reasoning, and Inference}.
\newblock Cambridge University Press, 2009.

\bibitem[Pearl(2013)]{Pearl2013}
J.~Pearl.
\newblock {Linear Models: A Useful ``Microscope'' for Causal Analysis}.
\newblock \emph{Journal of Causal Inference}, 1:\penalty0 155--170, 2013.

\bibitem[Peters and B\"{u}hlmann(2014)]{PetersandBuhlmann2014}
J.~Peters and P.~B\"{u}hlmann.
\newblock {Identifiability of Gaussian Structural Equation Models with Equal
  Error Variances}.
\newblock \emph{Biometrika}, 101:\penalty0 219--228, 2014.

\bibitem[Peters et~al.(2011)Peters, Mooij, Janzing, and
  Sch\"{o}lkopf]{Petersetal.2011}
J.~Peters, J.~M. Mooij, D.~Janzing, and B.~Sch\"{o}lkopf.
\newblock {Identifiability of Causal Graphs using Functional Models}.
\newblock In \emph{Proceedings of the 27th Conference on Uncertainty in
  Artificial Intelligence}, pages 589--598, 2011.

\bibitem[Peters et~al.(2017)Peters, Janzing, and
  Sch\"{o}lkopf]{Petersetal.2017}
J.~Peters, D.~Janzing, and B.~Sch\"{o}lkopf.
\newblock \emph{{Elements of Causal Inference: Foundations and Learning
  Algorithms}}.
\newblock The MIT Press, 2017.

\bibitem[Shpitser(2015)]{Shpitser2015}
I.~Shpitser.
\newblock {Segregated Graphs and Marginals of Chain Graph Models}.
\newblock In \emph{Advances in Neural Information Processing Systems}, pages
  1720--1728, 2015.

\bibitem[Shpitser et~al.(2017)Shpitser, Tchetgen, and
  Andrews]{Shpitseretal.2017}
I.~Shpitser, E.~J.~T. Tchetgen, and R.~Andrews.
\newblock {Modeling Interference Via Symmetric Treatment Decomposition}.
\newblock \emph{arXiv:1709.01050 [stat.ME]}, 2017.

\bibitem[Spirtes et~al.(2000)Spirtes, Glymour, and Scheines]{Spirtesetal.2000}
P.~Spirtes, C.~Glymour, and R.~Scheines.
\newblock \emph{Causation, Prediction and Search}.
\newblock MIT Press, 2000.

\bibitem[Studen\'y(1997)]{Studeny1997b}
M.~Studen\'y.
\newblock {On Recovery Algorithms for Chain Graphs}.
\newblock \emph{International Journal of Approximate Reasoning}, 17:\penalty0
  265--293, 1997.

\bibitem[Studen\'{y}(1998)]{Studeny1998}
M.~Studen\'{y}.
\newblock {Bayesian Networks from the Point of View of Chain Graphs}.
\newblock In \emph{Proceedings of the 14th Conference on Uncertainty in
  Artificial Intelligence}, pages 496--503, 1998.

\bibitem[Studen\'{y}(2005)]{Studeny2005}
M.~Studen\'{y}.
\newblock \emph{Probabilistic Conditional Independence Structures}.
\newblock Springer, 2005.

\bibitem[Tchetgen et~al.(2017)Tchetgen, Fulcher, and
  Shpitser]{Tchetgenetal.2017}
E.~J.~T. Tchetgen, I.~Fulcher, and I.~Shpitser.
\newblock {Auto-G-Computation of Causal Effects on a Network}.
\newblock \emph{arXiv:1709.01577 [stat.ME]}, 2017.

\bibitem[VanderWeele et~al.(2012)VanderWeele, Tchetgen, and
  Halloran]{VanderWeeleetal.2012}
T.~J. VanderWeele, E.~J.~T. Tchetgen, and M.~E. Halloran.
\newblock {Components of the Indirect Effect in Vaccine Trials: Identification
  of Contagion and Infectiousness Effects}.
\newblock \emph{Epidemiology}, 23:\penalty0 751–--761, 2012.

\bibitem[Wright(1921)]{Wright1921}
S.~Wright.
\newblock {Correlation and Causation}.
\newblock \emph{Journal of Agricultural Research}, 20:\penalty0 557--585, 1921.

\end{thebibliography}

\end{document}